\authorrunning{H. N. Adorna}
\titlerunning{Matrices for SN P Systems}
\newtheorem{mydef}{Definition}
\newtheorem{mythm}{Theorem}
\newtheorem{mylem}{Lemma}
\newtheorem{myobserve}{Observation}
\newtheorem{mycoro}{Corollary}
\newtheorem{myexample}{Example}
\newcommand{\myqed}{\begin{flushright}$\blacksquare$\end{flushright}}
	\title{Matrix Representations of Spiking Neural P Systems: Revisited}
\author{Henry N. Adorna}
	\institute{
	Department of Computer Science (Algorithm and Complexity)\\
	University of the Philippines Diliman\\
         1101 Quezon City, Philippines\\
	E-mail: {{\tt hnadorna@dcs.upd.edu.ph}}
	}
\begin{document}

\maketitle

\begin{abstract}
In the 2010, matrix representation of SN P system without delay was presented while in the case of SN P systems with delay, matrix representation was suggested in the 2017. These representations brought about series of simulation of SN P systems using computer software and hardware technology.  In this work, we revisit these representation and provide some observations on the behavior of the computations of SN P systems. The concept of reachability of configuration is considered in both SN P systems with and without delays.  A better computation of next configuration is proposed in the case of SN P system with delay. 
\end{abstract}
{\bf Keywords: } Membrane Computing, Spiking Neural P Systems, Matrices, Reachability


\section{Introduction}
Membrane computing or P system is introduced by Gh. P\u{a}un initially in the 1999 via electronic media and in the 2000, through the seminal journal paper \cite{paun2000}. P system could be classified as cell-like, tissue-like and neural-like. The books \cite{paun2002} and \cite{paun2010} are the best references for introducing one to P systems.  

In the 2006, neural-like P systems were introduced by Ionescu et al in \cite{ionescu2006}.  It was called Spiking Neural P system which was inspired by the neurophysiological behavior of neurons (in brain) sending electrical impulses along axons to other neurons.  Since the introduction of SN P system, several results  (see survey paper \cite{pan2016} and references herein, among others) have been reported in the literature, in particular, generating or recognizing elements of a set.  

In the 2010, motivated by finding the previous configuration in P system like in  \cite{naranjo2010}, the idea of representing SN P system by a matrix was introduced in \cite{zeng-etal2010}.  This somehow became the basis of several software simulations \cite{delacruz2018a, delacruz2018b, jimenez2018, delamor2017}, among others, done for SN P systems. 

\pagebreak

In the 2017, in response to a problem from \cite{zeng-etal2010}, a matrix representation of SN P system with delay is introduced in \cite{carandang2017}.  However, reference \cite{carandang2017} focused on the implementation of SN P system by a software in CUDA.

In this paper, we shall look into some details of these matrix representations of both SN P systems with and without delay.  

In Section \ref{1snp-d}, we introduced SN P sytems without delay and its matrix representation as in \cite{zeng-etal2010}.  We provided list of observations regarding the matrix representation. 
In Section \ref{snp-compute-mat}, the computation of SN P system without delay is presented, emphasizing results on reachability 

SN P systems with delay is revisited in Section \ref{snp+d}. The example mentioned in Sections thereafter is from \cite{carandang2017}.  Section \ref{snp+d-compute} presents a new formula for computing the next configuration different from \cite{carandang2017}.  Also reachability of configuration in SN P system with delay is provided 
We used the example SN P system in \cite {zeng-etal2010}, and \cite{carandang2017} to illustrate concepts and computations presented in this paper.
Section \ref{final} concludes the paper with remarks on the findings and some suggestions.


\section{SN P systems without Delay}\label{1snp-d}


\begin{mydef} {\bf (SN P Systems $\Pi$ without delay)} \\
An SN P system without delay, of degree $m \geq 1,$ is a construct of the form
$$\Pi = ( O, \sigma_1, \ldots , \sigma_m, syn, in, out),$$
where
\begin{enumerate}
\item $O= \{ a \}$ is a singleton alphabet ($a$ is called spike);
\item $\sigma_1, \ldots , \sigma_m$ are neurons of the form $$\sigma_i = (n_i, R_i), \, 1 \leq i \leq m,$$
where
\begin{enumerate}
\item $n_i \geq 0$ is the number of spikes in $\sigma_i;$
\item $R_i$ is a finite set of rules of the following forms:
\begin{enumerate}
\item[(1)] {\bf (Type 1;  spiking rule)} \\ $E / a^c \rightarrow a^p;d$ where $E$ is regular expression over $\{ a\},$ \\ and $c \geq 1, p \geq 1,$ such that $c \geq p,$ and a delay $d=0;$
\item[(2)]  {\bf(Type 2; forgetting rule)} \\  $a^s \rightarrow \lambda,$ for $s \geq 1,$ such that for each rule $E/a^c \rightarrow a^p ; 0$ of type (1) from $R_i,$  $a^s \not \in L(E);$
\end{enumerate}
\end{enumerate} 
\item $syn = \{\, (i,j) \mid 1 \leq i,j \leq m,\, i \not = j\, \}$ (synapses between neurons);
\item $in, out \in \{ 1,2, \ldots , m\}$ indicate the input and output neurons, respectively. 
\end{enumerate}
\end{mydef}

The SN P system $\Pi$ computes sequentially in the neuron level but simultatneaously in system level.  Computation starts (at time $k=0,$) with the {\bf initial configuration} (we give definition below) of the system represented by the amount of spikes present initially in each neuron.  To transition from one configuration to another one, the system applies simultaneously an applicable rule in each neuron at some (finite) time $k.$  

We say a rule in $R_i$ of neuron $i$ is {\bf applicable} at time $k,$ if and only if the amount of spikes in the neuron satisfies the $E_i$  or  $a_j^{(k)} \in L(E_{i}).$   At some time $k,$ we can have in neuron $\sigma_j,$ $a_j^{(k)} \in L(E_x) \cap L(E_y),$ for some rules $r_x,$ and $r_y,$  $x \not = y.$  Thus in the neuron level, among the set of all applicable rules at time $k,$ the system has to choose which of these rules will be applied at that time $k.$  This is how the nondeterminism of the system is realized.  Whereas several neurons with its chosen applicable rule can fire or spike simultaneouly at time $k,$ demonstrating parallelism. 

It is important to note that, rule of type 2 is the only rule that removes spike(s) from neuron at some time $k$ when applied.  And such type of rule could only be applied if and only if the amount of neurons is exactly the amount of spikes it needs to be applied.  
In particular, if there exist a rule $a^s \rightarrow  \lambda \in R_i,$ then there cannot exist any rule $E / a^c \rightarrow a^p; 0$ of type $1$ in $R_i$ such that $a^s \in L(E).$

In specifying rules in SN P system, we follow the standard convention of simply not specifying $E$ whenever the left-hand-side of the rule is equal to $E.$ 

The sequence of configuration will consists a {\bf computation} of the system. We say that computation halts or reaches {\bf halting configuration} if it reaches a configuration where no more applicable rules are available.  We could represent the output of the system either as the {\bf number of steps lapse between the first two spikes} of the designated output neuron or as {\bf spike train.}  The sequence of spikes made by the system until the system reaches a halting configuration is called  {\bf spike train.}

All SN P systems that we will consider in this Section are SN P system without delay unless stated otherwise.


\begin{myexample}\label{example1}  {\bf An SN P system without delay for $\mathbb{N} - \{1\}$} \cite{zeng-etal2010}\\
Let $\Pi = (\{a\}, \sigma_1, \sigma_2, \sigma_3, sys , out),$ where $\sigma_1 = ( 2, R_1),$ with $R_1 = \{ a^2 / a \rightarrow a,  a^2 \rightarrow a\};$  $\sigma_2 = (1, R_2),$ with $R_2 = \{ a \rightarrow a\};$ and $\sigma_3 =(1, R_3),$ with $R_3 = \{ a \rightarrow a, a^2 \rightarrow \lambda \};$ $syn = \{(1,2), (1,3), (2,1), (2,3)\};$  $out = \sigma_3.$

Note here that all rules do not have delay(s). This SN P system $\Pi$ produces or generates $\mathbb{N} - \{1\}.$ 
\end{myexample}

The object of this work is a matrix that represent SN P system, in particular in this section we consider SN P system without delay.  We state below the definition of matrix $M_{\Pi}$ for an SN P system without delay from \cite{zeng-etal2010}. 


\begin{mydef}\label{matrix-SNP-d} {\bf (Spiking Transition Matrix of $\Pi$ without delays)} \\
Let $\Pi$ be an SN P system without delay with $n$ total rules and $m$ neurons. The {\bf spiking transition matrix} of an SN P system $\Pi$ is $$M_{\Pi} = [ b_{ij} ]_{n \times m},$$
where
\[ b_{ij}  =
  \begin{cases}
    -c,       & \quad \text{if the left-hand side rule of } r_i  \text{ in } \sigma_j \text{ is } a^c\\
    & \quad \text{ and consumed } $c$ \text{ spikes}\\
   ~ ~ ~ p,  & \quad \text{if the right-hand side of the rule } r_i \text{ in } \sigma_s \\
   & \quad (s \neq j \text{ and } (s,j) \in syn) \text{ is } a^p\\ 
 ~ ~ ~ 0, & \quad \text{otherwise}  \end{cases}
\]

\end{mydef}

The matrix $M_{\Pi}$ is (almost) a natural representation of SN P system without delay.  Note that the entries $b_{ij}$'s are obtained by considering the rules $E_i / a^{c} \rightarrow a^{p}\colon 0$ in some neuron $\sigma_j.$  The rows of $M_{\Pi}$ are the set of rules, while the columns are the set of neurons.  At one glance matrix $M_{\Pi}$ provides information in the amount of spikes a certain rule consumed from a neuron where it resides and produced a spike to the adjacent neurons when a rule is applied.  Also, we could tell if a set of rules is in a neuron or not.  

In \cite{zeng-etal2010}, the matrix representation of Example \ref{example1} is 
$
M_{\Pi} = \left (
 \begin{matrix} 
  -1& 1 & 1 \\
  -2 & 1 & 1 \\
  1 & -1 & 1 \\ 
  0 & 0 & -1 \\
  0 & 0 & -2  
 \end{matrix}
\right ).
$  This matrix is a static representation of the system.


\subsection{Observations on $M_{\Pi}$}\label{snp-d-obs}

We provide summary of some observations on the matrix $M_{\Pi}$ that represents an SN P system $\Pi$ (without delay).

\begin{myobserve}\label{obs1}  \cite{zeng-etal2010} \\
Given matrix representation $M_{\Pi}$ of an SN P system $\Pi$ (without delay), we observed the following:
\begin{enumerate}
\item For each $i,$ there exists unique negative $b_{ij},$ for all $j.$ 
\item For each $j,$ there exists at least one negative $b_{ij},$ for all $i.$  
\end{enumerate}
\end{myobserve}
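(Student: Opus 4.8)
The plan is to read off both statements directly from Definition~\ref{matrix-SNP-d}, using the single structural fact that every rule $r_i$ of $\Pi$ lies in the rule set $R_j$ of exactly one neuron $\sigma_j$, together with the sign pattern of the three defining cases for $b_{ij}$: the ``consumed'' case always yields a strictly negative value $-c$ with $c\geq 1$, the ``produced'' case yields $p\geq 1>0$, and the default case yields $0$. Everything then reduces to a short case inspection.

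For part~1 I would fix a row index $i$ and let $\sigma_j$ be the unique neuron containing $r_i$. Whether $r_i$ is a spiking rule $E/a^c\ra a^p;0$ or a forgetting rule $a^s\ra\lambda$, its left-hand side consumes some $c\geq 1$ spikes from $\sigma_j$, so the first case of the definition gives $b_{ij}=-c<0$. It then remains to see that no other entry of row $i$ is negative: for any column $j'\neq j$ the rule $r_i$ does not reside in $\sigma_{j'}$, so the first case cannot apply, and $b_{ij'}$ equals either the production count $p\geq 1$ (when $r_i$ is a spiking rule and $(j,j')\in syn$) or $0$ otherwise; in both cases $b_{ij'}\geq 0$. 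Hence $b_{ij}$ is the unique negative entry of row $i$.

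For part~2 I would fix a column index $j$, invoke the standing convention that each neuron carries at least one rule, and pick any $r_i\in R_j$. By the same reading of the first case, $b_{ij}=-c<0$, so column $j$ contains at least one negative entry.

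There is essentially no computation to grind through; the two points that need a little care are (i) that the regular expression $E$ attached to a spiking rule is irrelevant to the sign of every entry --- only the consumed amount $c$ matters, and that is precisely what the first case of Definition~\ref{matrix-SNP-d} records --- so that the \emph{uniqueness} asserted in part~1 genuinely reduces to the non-negativity of the remaining two cases; and (ii) that part~2 silently uses $R_j\neq\emptyset$ for every $j$, which is a convention on the well-formedness of $\Pi$ rather than a clause of the syntax, since a ruleless neuron would contribute an all-zero, hence non-negative, column. This bookkeeping hypothesis is the closest thing to an obstacle in the argument.
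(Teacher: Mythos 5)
The paper states Observation~\ref{obs1} without any proof at all --- it is simply recorded as an observation inherited from the cited source --- so there is no ``paper proof'' to compare against; your argument supplies one, and it is correct. The case inspection of the three clauses of Definition~\ref{matrix-SNP-d} is exactly the right level of detail: every rule sits in exactly one neuron and consumes $c\geq 1$ (or $s\geq 1$) spikes there, giving the unique negative entry in its row, and all other entries in that row fall under the $p\geq 1$ or $0$ clauses.

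Your flag on part~2 is the substantive point, and it is well taken: the claim that every column has a negative entry genuinely requires $R_j\neq\emptyset$ for every $j$, and the paper itself undercuts this elsewhere --- the discussion after Observation~\ref{obs3} explicitly contemplates ``neurons which do not have rules in it.'' So part~2 of Observation~\ref{obs1} holds only under that well-formedness convention, exactly as you say. One small correction: the column of a ruleless neuron need not be all-zero, since other neurons' rules can still send spikes to it, producing positive entries $p$ in that column; but it would indeed contain no negative entry, which is all your argument needs.
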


We add the following observations on network/graph structure to SN P system $\Pi$ through its matrix $M_{\Pi}.$

\begin{myobserve}\label{obs2}
Given matrix representation $M_{\Pi}$ of an SN P system $\Pi$ (without delay), we observed the following:
\begin{enumerate}
\item The number of negative entries for each $j,$ is the number of rules $\sigma_j$ has in $\Pi.$
\item $\sigma_j$ is designated as {\bf  output neuron,} 
if there exists a rule $r_i$ such that $b_{ij}$ is negative and is the only non-zero entry in row $i.$
\item If $b_{ij}$ is a negative entry in row $i,$  then $\sigma_j$ could provide/produce spike to $\sigma_{j'},$ 
for any existing non-zero positive entry, $b_{ij'},$ such that $j \neq j'.$  The number of such positive entries is the so-called {\bf out-degree of $\sigma_j$}
\end{enumerate}
\end{myobserve}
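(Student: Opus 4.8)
The plan is to derive all three items directly from Definition~\ref{matrix-SNP-d} together with Observation~\ref{obs1}; nothing beyond careful bookkeeping is needed. For item~1, I would first recall that, by the case analysis defining $b_{ij}$, an entry is negative precisely when the left-hand side of $r_i$ is $a^c$ and $r_i$ resides in $\sigma_j$ (its value then being $-c$). Hence the negative entries occurring in column $j$ are in bijection with the rules whose home neuron is $\sigma_j$. By part~(1) of Observation~\ref{obs1} each row $i$ has a unique negative entry, so every rule is counted in exactly one column, namely that of the neuron containing it; therefore the number of negative entries in column $j$ equals the number of rules of $\sigma_j$.

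For item~2, fix a spiking rule $r_i = E/a^c \rightarrow a^p;0$ of $\sigma_j$, so $b_{ij} = -c < 0$, and note that since $p \geq 1$ the application of $r_i$ necessarily emits $a^p$. By Definition~\ref{matrix-SNP-d} a positive entry $b_{ij'}$ is created for every neuron $\sigma_{j'}$ with $(j,j') \in syn$; if row $i$ has no nonzero positive entry, then $\sigma_j$ has no outgoing synapse, so the spikes emitted by $r_i$ can only leave the system, which by definition is the role of the output neuron, forcing $\sigma_j$ to be the output neuron. The point needing care is that a forgetting rule $a^s \rightarrow \lambda$ also yields a row whose only nonzero entry is the negative $-s$; so the statement should be read as applying to a rule that actually emits spikes (a Type~1 rule), or one must reinstate the information --- present in $\Pi$ but lost in $M_{\Pi}$ --- of which rows come from forgetting rules. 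I would make this assumption explicit.

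For item~3, I would fix a row $i$ and let $\sigma_j$ be its home neuron, i.e.\ the unique column with $b_{ij} < 0$ (part~(1) of Observation~\ref{obs1}). By Definition~\ref{matrix-SNP-d}, a positive entry $b_{ij'} = p > 0$ occurs exactly when $(j,j') \in syn$ and the right-hand side of $r_i$ is $a^p$; hence every positive entry of row $i$ points to an out-neighbor of $\sigma_j$, which gives the claim that $\sigma_j$ can provide a spike to $\sigma_{j'}$. When $r_i$ is a Type~1 rule the converse also holds, since then $b_{ij'} > 0$ for all $j'$ with $(j,j') \in syn$, so the positive entries of row $i$ are in bijection with the out-synapses of $\sigma_j$ and their number is the out-degree of $\sigma_j$; if $r_i$ is a forgetting rule the row has no positive entries, the degenerate case. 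The main obstacle throughout is precision rather than depth: $M_{\Pi}$ forgets both the Type~1/Type~2 distinction and the identity of the output neuron, so items~2 and~3 are only literally correct once attention is restricted to spiking rows, which I would therefore make the organizing principle of the argument.
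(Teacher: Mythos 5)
Your derivation is correct, but note that the paper offers no proof of this Observation at all: it is stated as a remark and then merely illustrated on the running example (the paragraph discussing the columns of $M_{\Pi}$ for Example~\ref{example1}), so your direct unfolding of Definition~\ref{matrix-SNP-d} combined with Observation~\ref{obs1} supplies an argument the paper leaves implicit. Your bookkeeping for item~1 (negative entries of column $j$ are in bijection with the rules homed in $\sigma_j$, each row contributing exactly one by Observation~\ref{obs1}) and for item~3 (positive entries of a spiking row enumerate the out-synapses of the home neuron) is exactly what the definition yields. More valuably, the caveat you isolate for items~2 and~3 is a genuine one: a forgetting rule $a^s \rightarrow \lambda$ in \emph{any} neuron produces a row whose unique non-zero entry is negative, so the stated criterion for recognizing the output neuron is only a necessary-looking heuristic, not a characterization, and the out-degree count degenerates to $0$ on forgetting rows. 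The paper tacitly concedes this point when it introduces the augmented matrix $\widehat{M_{\Pi}}$ precisely because $M_{\Pi}$ ``could identify the precise rule(s) in every neuron, but those in the output neuron(s)''; your restriction of items~2 and~3 to spiking (Type~1) rows is therefore the right way to make the Observation literally true, and is consistent with, indeed sharper than, what the paper asserts.
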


\begin{myobserve}\label{obs3}
Given matrix representation $M_{\Pi}$ 
of an SN P system $\Pi$ (without delay), we observed the following:
\begin{enumerate}
\item $M_{\Pi}$ is not always a square matrix.
\item $M_{\Pi}$ would be a square matrix, if and only if $m=n,$ that is, the number of neurons is equal to the total number of rules in the system. 
\end{enumerate}
\end{myobserve}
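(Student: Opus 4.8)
The plan is to read both claims directly off Definition~\ref{matrix-SNP-d}, which fixes the shape of $M_{\Pi}$ as $n \times m$, with $n$ the total number of rules of $\Pi$ and $m$ its number of neurons. Since a matrix is square exactly when its number of rows equals its number of columns, item~2 will turn out to be essentially a restatement of this shape, so the only claim with genuine content is item~1, namely that $n$ and $m$ really can differ.

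First I would settle item~1 by exhibiting a single SN P system whose spiking transition matrix fails to be square. The system $\Pi$ of Example~\ref{example1} serves: it has $m = 3$ neurons, while its rule set is $R_1 \cup R_2 \cup R_3$ with $|R_1| = 2$, $|R_2| = 1$, $|R_3| = 2$, so $n = 5$. Hence the matrix $M_{\Pi}$ displayed after Example~\ref{example1} is $5 \times 3$, and since $5 \neq 3$ it is not square. I would add that this is not a peculiarity of the example: for any fixed $m$ one can build a neuron carrying two or more rules, forcing $n > m$, so non-square matrices are the typical situation.

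Next, for item~2, I would spell out the biconditional. In the forward direction, suppose $M_{\Pi}$ is square; by Definition~\ref{matrix-SNP-d} it has $n$ rows and $m$ columns, and squareness means these counts agree, so $n = m$. In the backward direction, if $n = m$ then the declared shape $n \times m$ is the square shape $n \times n$. As an internal consistency check one may recover $n$ from the matrix itself via Observation~\ref{obs2}(1): the number of negative entries in column $j$ equals the number of rules of $\sigma_j$, so summing these counts over the $m$ columns returns the total $n$, which by Observation~\ref{obs1}(1) (one negative entry per row) equals the number of rows; thus the two dimensions coincide precisely when $n = m$.

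I do not anticipate a real obstacle: the assertion is definitional once the $n \times m$ shape of $M_{\Pi}$ is granted, and item~1 needs only one witness. The single caveat I would state in the proof is that we implicitly assume $n \geq 1$ and $m \geq 1$, i.e. the system has at least one rule and at least one neuron, so that ``square matrix'' is meaningful; under this convention both items hold as stated.
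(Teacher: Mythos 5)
Your proposal is correct and follows essentially the same route as the paper: the claim is definitional given the $n \times m$ shape of $M_{\Pi}$, item~1 is witnessed by the $5 \times 3$ matrix of Example~\ref{example1}, and item~2 is just the statement that an $n \times m$ matrix is square exactly when $n = m$ (the paper's accompanying remarks likewise only note that $n$ can exceed $m$ or vice versa, and that $m = n$ occurs e.g.\ when each neuron has a unique rule or some neurons are rule-free). Your extra consistency check via Observations~\ref{obs1} and~\ref{obs2} is harmless and adds nothing beyond the definition.
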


Number $1$ of Observation \ref{obs3} tells that the number of neurons could be less than the total number of rules in the system.  Also, it is possible to have more neurons than total number of rules in a system.

Number 2 of Observation \ref{obs3} would happen if each neuron would have a unique rule in it, or there are neurons which do not have rules in it but the system satisfies $m=n.$

\begin{myobserve}\label{obs4}
The matrix $M_{\Pi}$ 
of SN P system $\Pi$ (without delay) provides us the following information on the model: 
\begin{enumerate}
\item the amount of spikes each neuron $\sigma_j$ in the system consumes, if a rule in $\sigma_j$ is applied, i.e. $b_{ij} <0.$
\item the amount of spike each neuron $\sigma_j$ in the system produces, if a rule in $\sigma_j$ is applied, i.e. $b_{ij} \geq 0.$
\item the connectivity of neurons.
\end{enumerate}
\end{myobserve}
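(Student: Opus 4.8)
The plan is to read each of the three items off Definition~\ref{matrix-SNP-d} by a sign analysis of the entries $b_{ij}$, using Observations~\ref{obs1} and~\ref{obs2} to disambiguate the zero entries. Throughout, the key device is Observation~\ref{obs1}(1): each row $i$ has a \emph{unique} negative entry, whose column identifies the neuron in which rule $r_i$ resides, so every row can first be ``anchored'' to its resident neuron before consumption and production are extracted.

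For item~1, fix a column $j$ and a row $i$ with $b_{ij} < 0$. The only clause of Definition~\ref{matrix-SNP-d} that produces a negative value is the first one, which forces the left-hand side of $r_i$ in $\sigma_j$ to be $a^c$ with $c$ spikes consumed and sets $b_{ij} = -c$; hence $|b_{ij}| = c$ is exactly the number of spikes removed from $\sigma_j$ when $r_i$ is applied. By Observation~\ref{obs2}(1) the rules of $\sigma_j$ are precisely the rows with a negative entry in column $j$, so ranging over those rows lists, for $\sigma_j$, the consumption of each of its rules.

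For item~2, fix a row $i$ and let $s$ be the column of its unique negative entry, so that $r_i$ resides in $\sigma_s$. For any $j \neq s$ with $b_{ij} > 0$, the second clause of Definition~\ref{matrix-SNP-d} forces $(s,j)\in syn$ and makes the right-hand side of $r_i$ equal to $a^{p}$ with $p = b_{ij}$; thus every positive entry of row $i$ equals the number $p$ of spikes $r_i$ sends to the corresponding neighbour. Forgetting rules $a^{t}\to\lambda$ (and the output neuron) contribute only zeros off the resident column, which is consistent with the reading $b_{ij}\ge 0$: a zero simply records ``no spike delivered'', and since type~1 rules have $p\ge 1$, a zero never means ``produces zero along an existing synapse''.

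For item~3, one assembles the previous two: the support (the nonzero pattern) of $M_{\Pi}$ encodes $syn$. For each row $i$, with resident neuron $\sigma_j$ read from its unique negative entry, the columns $j'$ with $b_{ij'}>0$ are out-neighbours of $\sigma_j$; the union of these over all rows belonging to $\sigma_j$ is the out-neighbourhood of $\sigma_j$, which is the out-degree reading of Observation~\ref{obs2}(3), and doing this for every neuron recovers the whole directed synapse graph. The main obstacle is purely interpretive rather than computational: the value $0$ is overloaded (rule not resident in that neuron / neuron not a synaptic target / vacuously a zero-spike production), so the argument must first pin each row to its resident neuron via the unique negative entry guaranteed by Observation~\ref{obs1}(1); once that anchoring is in place, items~1--3 are a direct transcription of Definition~\ref{matrix-SNP-d}.
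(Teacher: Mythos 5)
Your proposal is correct and takes essentially the same route as the paper, which states Observation~\ref{obs4} without any formal proof and justifies it only by reading the entries off Definition~\ref{matrix-SNP-d} and illustrating on the example matrix; your anchoring of each row to its resident neuron via the unique negative entry of Observation~\ref{obs1}(1) is exactly the implicit mechanism the paper relies on. The only point to temper is the final claim in your item~3 that this ``recovers the whole directed synapse graph'': a synapse $(s,j)$ leaving a neuron $\sigma_s$ whose rules are all forgetting rules (or which has no rules at all) contributes no positive entry anywhere in $M_{\Pi}$, so the matrix need not determine all of $syn$ --- which is precisely why the paper separately introduces $\text{\bf Struc-}M_{\Pi}$ for the structural topology.
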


The matrix $M_{\Pi}$ of our example (from \cite{zeng-etal2010}) above could provide us the following static information about SN P system $\Pi$ without delay;  The existence of negative entries (namely, $-1$ and $-2$) in column $1$ tells us that neuron $\sigma_1$ has two rules; similarly, we could say, $\sigma_2$ has only one rule and $\sigma_3$ has two rule.  In particular, we can declare $\sigma_3$ an output neuron that sends spike(s) to the environment.  The positive entries in each row, provide us information about the amount of spike that a rule (a row in $M_{\Pi}$) sends to some neurons.  Note that with these information we could dentify the precise rule(s) in every neurons, but those in the output neuron(s).  This is when the so-called {\bf augmented spiking matrix for $\Pi,$} $M_{\Pi}$ \cite{zeng-etal2010} will be needed.  Augmented spiking matrix for $\Pi$ is the same matrix $M_{\Pi}$ but with an additional column to indicate environment.  Thus in the case of our example, we denote the  augmented spiking matrix for $\Pi$ as
$
\widehat{M_{\Pi}} = [b_{ij} | e_j]_{5 \times 4} = \left (
 \begin{matrix} 
  -1& 1 & 1  & 0\\
  -2 & 1 & 1 & 0\\
  1 & -1 & 1  & 0\\ 
  0 & 0 & -1  & 1\\
  0 & 0 & -2  & 0
 \end{matrix}
\right ).
$
 where the last column is one for the environment.  The $0$ in $(5,4)$-entry in $M_{\Pi}$ indicates that rule $r_5$ simply forgets, that is, $r_5: a^2 \rightarrow \lambda$ ($\lambda = a^0$).  While $(4,4)$-entry implies that $r_4: a \rightarrow a$ sends (output) spike to the environment.
 
 In this work we only consider $M_{\Pi}$ of the SN P system without delay. 

We consider in the following observation the structural topology of SN P system.
Observation \ref{obs5} below seems to reflect results obtained in \cite{Ibo2011} on existence of periodicty of SN P systems. 

\begin{myobserve}\label{obs5} 
Given an SN P system $\Pi$ (without delay), we observed the following:
\begin{enumerate}
\item Suppose we define matrix $$\text{\bf Struc-}M_{\Pi} = [ c_{ij} ]_{ m \times m},$$
such that for any pair of neuron $\sigma_i$ and $\sigma_j,$ we have 
\[ c_{ij}  =
  \begin{cases}
    -1,       & \quad (i,j) \in syn \text{ and } i = j\\   
   ~ ~ ~ 1,  & \quad (i,j) \in syn \text{ and } i \neq j\\
 ~ ~ ~ 0, & \quad (i,j) \not \in syn.
  \end{cases}
\]

Then $\text{\bf Struc-}M_{\Pi}$ represents a directed graph for $\Pi.$  In particular, $\text{\bf Struc-}M_{\Pi}$ is a square matrix.
\item If $\text{\rm row-rank}(\text{\bf Struc-}M_{\Pi})\footnote{$\text{\rm row-rank}(\text{\bf Struc-}M_{\Pi})$ is the number of linearly independent rows in $\text{\bf Struc-}M_{\Pi}$} < m,$ where $m$ is number of neurons, then $\Pi$ has a cycle.
\end{enumerate}
\end{myobserve}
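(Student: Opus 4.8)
The plan is to handle part~1 by unwinding the definitions and part~2 by contraposition through a topological ordering of the synapse graph. Part~1 requires essentially no work: by construction $\text{\bf Struc-}M_{\Pi}=[c_{ij}]_{m\times m}$ is indexed by $\{1,\dots,m\}\times\{1,\dots,m\}$, so it has exactly $m$ rows and $m$ columns and is therefore square. For $i\neq j$ one has $c_{ij}\neq 0$ exactly when $(i,j)\in syn$, and $c_{ii}=-1$ for every $i$, so the off-diagonal nonzero pattern of $\text{\bf Struc-}M_{\Pi}$ coincides with the arc set $syn$; regarding the neurons $\sigma_1,\dots,\sigma_m$ as vertices and the elements of $syn$ as directed arcs, $\text{\bf Struc-}M_{\Pi}$ is then (a signed form of) the adjacency matrix of the directed graph $G_{\Pi}=(\{\sigma_1,\dots,\sigma_m\},syn)$ attached to $\Pi$, which is the assertion.

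For part~2 I read ``$\Pi$ has a cycle'' as ``$G_{\Pi}$ has a directed cycle'', and I would prove the contrapositive: if $G_{\Pi}$ is acyclic then $\text{\rm row-rank}(\text{\bf Struc-}M_{\Pi})=m$. A finite acyclic digraph admits a topological ordering, i.e.\ a bijection $\tau\colon\{1,\dots,m\}\to\{1,\dots,m\}$ with $(\tau(k),\tau(\ell))\in syn$ implying $k<\ell$. Permuting the rows and columns of $\text{\bf Struc-}M_{\Pi}$ simultaneously by $\tau$ yields the matrix $N$ with $N_{k\ell}=c_{\tau(k),\tau(\ell)}$; the defining property of $\tau$ forces $N_{k\ell}=0$ for all $k>\ell$, so $N$ is upper triangular, with diagonal entries $N_{kk}=c_{\tau(k),\tau(k)}=-1$. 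Hence $\det N=(-1)^{m}\neq 0$, so $N$ is nonsingular, i.e.\ $\text{\rm row-rank}(N)=m$; as a simultaneous row-and-column permutation preserves rank, $\text{\rm row-rank}(\text{\bf Struc-}M_{\Pi})=m$, and contraposing gives part~2.

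The delicate point, and the step I would single out as the main obstacle, is the role of the diagonal: the triangularization above is decisive only because every diagonal entry equals $-1$, which is what the clause ``$c_{ij}=-1$ if $i=j$'' must be read as supplying for \emph{every} neuron (note $syn$ contains no pair $(i,i)$, so no neuron carries a self-synapse). With the plain $0/1$ adjacency matrix of $syn$, whose diagonal is zero, the implication would fail: the acyclic chain $\sigma_1\to\sigma_2\to\cdots\to\sigma_m$ has a strictly upper triangular adjacency matrix of rank $m-1<m$ yet contains no cycle. In brief, the proposal is to (i) observe that part~1 is definitional, (ii) fix the $-1$-on-the-diagonal convention, and (iii) invoke the two standard facts that a finite digraph is acyclic iff it admits a topological order and that simultaneous row/column permutations preserve rank, then read off the determinant as above. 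It is also worth noting that only this one implication holds: for three neurons in which $\sigma_2$ is joined to each of $\sigma_1$ and $\sigma_3$ by synapses in both directions, $\text{\bf Struc-}M_{\Pi}$ is nonsingular although $G_{\Pi}$ has $2$-cycles, so the converse is false.
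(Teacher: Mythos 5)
Your proof is correct, and it is worth stressing that the paper itself offers no proof of this observation at all: it is stated as an Observation and followed only by the single worked instance in which $\text{\rm row-rank}(\text{\bf Struc-}M_{\Pi})=2<3$ and the $2$-cycle between $\sigma_1$ and $\sigma_2$ is pointed out. So your argument supplies something the paper omits rather than rederiving it. Part 1 is indeed purely definitional, and for part 2 the contrapositive via a topological ordering is the right mechanism: acyclicity yields a simultaneous row-and-column permutation putting $\text{\bf Struc-}M_{\Pi}$ into upper-triangular form with every diagonal entry equal to $-1$, hence determinant $(-1)^m\neq 0$ and full rank. Two of your side remarks are genuinely valuable and should be kept. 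First, you are right that the paper's case distinction is literally vacuous on the diagonal: since $syn$ contains no pair $(i,i)$, the clause ``$(i,j)\in syn$ and $i=j$'' never fires, and the only reading consistent with the paper's own example (whose diagonal is $(-1,-1,-1)$) is that $c_{ii}=-1$ unconditionally; your proof correctly isolates this as the load-bearing convention, and your chain example shows the statement would be false for the plain $0/1$ adjacency matrix. Second, your counterexample with $syn=\{(1,2),(2,1),(2,3),(3,2)\}$, whose matrix has determinant $1$, correctly shows the converse fails: rank deficiency is a sufficient but not necessary certificate for the existence of a directed cycle, so the observation should be read as a one-way test. The only caveat is interpretive rather than mathematical: you read ``$\Pi$ has a cycle'' as ``the synapse digraph has a directed cycle,'' which matches the paper's example and is the only reading under which the claim is true.
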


In the case of our example,  $\text{\bf Struc-}M_{\Pi} =  \left (
 \begin{matrix} 
  -1& 1 & 1 \\
  1 & -1 & 1 \\ 
  0 & 0 & -1 
 \end{matrix}
\right ).
$

Notice that,  $\text{\rm row-rank}(\text{\bf Struc-}M_{\Pi}) = 2 < 3,$ and there is a loop created by the synapses connecting $\sigma_1$ and $\sigma_2.$

\section{Computation of SN P Systems via $M_{\Pi}$}\label{snp-compute-mat}

The configuration of an SN P system $\Pi$ is described by the amount of spikes present in each neuron in the system at a given instance. We define a 
configuration of $\Pi$ at time $k,$ as an $m$-vector of integers representing amount of spikes present in each $m$ neurons at a time $k.$   


\begin{mydef}{\bf (Configuration Vector of SN P systems $\Pi$ without delay)}\\
A {\bf configuration} of SN P system $\Pi$ without delay is an $m$-size vector of integers denoted by $$C = (a_1, a_2, \ldots , a_m),$$ where $a_j$'s are the amount of spikes present at neuron $j,$ for each $j=1, \ldots , m.$ 
  
We denote a {\bf configuration at time $k \geq 0$} of an SN P system $\Pi,$ with $m$ neurons as vector $$C^{(k)} = (a_1^{(k)}, a_2^{(k)}, \ldots , a_m^{(k)}),$$ where $a_j^{(k)} \in \mathbb{Z}^{+} \cup \{0\}$ for all $j=1, \ldots , m$ are the amount of spikes present at neuron $i$ at time $k.$ 

The vector $C^{(0)}=(a_1^{(0)}, a_2^{(0)} , \ldots , a_m^{(0)})$ is the initial configuration vector of SN P system $\Pi,$ where $a_1^{(0)}, a_2^{(0)} , \ldots , a_m^{(0)}$ are initial amount of spikes in neuron $\sigma_j,$ $j = 1, 2, \ldots , m.$ 
\end{mydef}

In some variants of SN P system, especially where SN P systems are used to solve practical or industrial problems, the number of spikes is (almost always) non-negative real numbers.  
In this work, we limit the number of spikes to be in $\mathbb{Z}^{+} \cup \{0\}.$

In Example \ref{example1}, the initial configuration of $\Pi$ is $C^{0)} = (2,1,1).$


\begin{mydef}\label{SpikingVector}{\bf (Spiking vector)}\\
Let $C^{(k)} = (a_1^{(k)}, a_2^{(k)}, \ldots , a_m^{(k)}),$ be the current configuration of SN P system $\Pi$ with $m$ neurons and a total of $n$ rules.  Assume a total order $d:$ $1,2, \ldots , n$ is given for all $n$ rules of $\Pi, $ so the rules can be refered to as $r_1, r_2, \ldots, r_n.$  We denote a {\bf valid spiking vector} by
$$Sp^{(k)} = (sp_1^{(k)}, sp_2^{(k)}, \ldots  , sp_n^{(k)}),$$ 
where
\begin{enumerate}  
\item for each neuron $\sigma_j,$ such that $a^{(k)}_j \in L(E_i),$ there exists a unique $r_i \in R_j$ of $\sigma_j,$ such that $sp^{(k)}_i = 1;$  \\ ($r_i$ is chosen among all possible applicable rules from $R_j$ at time $k.$)
\item if  $a^{(k)}_j \not \in L(E_i),$ then $sp^{(k)}_i = 0;$  and 
\item $\sum_{i=1}^{n} sp^{(k)}_i \leq m.$ 
\end{enumerate}

We denote by $Sp^{(0)} = (sp_1^{(0)}, sp_2^{(0)}, \ldots  , sp_n^{(0)}),$ the {\bf initial spiking vector} with respect to $C^{(0)}$ of $\Pi.$
\end{mydef}

Definition \ref{SpikingVector} implies that $Sp^{(k)} = (sp_1^{(k)}, sp_2^{(k)}, \ldots  , sp_n^{(k)}) 
$ is a vector such that 
$sp^{(k)}_i \in 
\{0,1\}$ for each $i=1,2, \ldots, n.$   The vector $Sp^{(k)}$ indicates which rules must be fired or used at time $k$ given current configuration $C^{(k)}$ of $\Pi.$  Moreover, if a neuron contains at least two rules and both rules could be applied at some time $k,$ then only one of the two rules must be selected to fire or spike at time $k.$   This is where non-determinism is demonstrated in the system.  A vector is not a valid spiking vector if it does not satisfy Definition \ref{SpikingVector}.

In our example, the initial spiking vector with respect to $C^{(0)}$ could be either $Sp^{(0)} = (1,0,1,1,0)$ or $(0,1,1,1,0).$   The vector $(1,1,1,1,0)$ is not a valid spiking vector.

Applying the 
rules as indicated by $Sp^{(k)}$ at time $k,$ would change the configuration from $C^{(k)}$ to $C^{(k+1)}.$  This means that neurons of  $\Pi$ in the next configuration would either gain or loss some spikes at time $k+1.$  In particular, we could represent such so-called {\bf transition net gain or loss} of $\Pi$ at time $k$ as vector $NG^{(k)}=C^{(k+1)} - C^{(k)}.$

 The product  $Sp^{(k)} \cdot M_{\Pi}$ represents the net amount of spikes the system obtained at time $k.$ The sum $sp^{(k)} _1 b_{1j} + \cdots + sp^{(k)}_n b_{nj}$ is the amount of spikes in $\sigma_j$ (for all $j$) at time $k.$
 


\begin{mylem} {\bf (Net gain vector)} \cite{zeng-etal2010}\\
Let $\Pi$ be an SN P systems without delay with $n$ total rules and $m$ neurons. Then
$$NG^{(k)} = Sp^{(k)} \cdot M_{\Pi},$$
where $Sp^{(k)}$ is a valid spiking vector.
\end{mylem}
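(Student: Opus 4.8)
The plan is to unpack both sides of the claimed identity coordinate-by-coordinate and show that, for each neuron index $j$, the $j$-th entry of $Sp^{(k)} \cdot M_{\Pi}$ equals $a_j^{(k+1)} - a_j^{(k)}$, which is by definition the $j$-th entry of $NG^{(k)} = C^{(k+1)} - C^{(k)}$. First I would write out the matrix-vector product explicitly: the $j$-th component of $Sp^{(k)} \cdot M_{\Pi}$ is $\sum_{i=1}^{n} sp_i^{(k)} b_{ij}$, exactly the sum highlighted in the paragraph preceding the lemma statement. So the whole argument reduces to verifying that this sum correctly accounts for every spike gained or lost by neuron $\sigma_j$ during the step from time $k$ to time $k+1$.

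Next I would split that sum according to the three cases in Definition \ref{matrix-SNP-d} of the entries $b_{ij}$. The terms with $sp_i^{(k)} = 0$ contribute nothing, so only the chosen (fired) rules matter. Among the fired rules, exactly one lies in $R_j$ itself — by Observation \ref{obs1}, row $i$ has a unique negative entry, and validity of $Sp^{(k)}$ (clause 1 of Definition \ref{SpikingVector}) ensures at most one rule per neuron fires — and its entry is $-c$ where $a^c$ is the left-hand side consumed; this records the $c$ spikes removed from $\sigma_j$. The remaining fired rules that contribute a nonzero $b_{ij}$ are precisely those residing in neurons $\sigma_s$ with $(s,j)\in syn$, each contributing $+p$ where $a^p$ is the right-hand side; summing these gives the total number of spikes arriving at $\sigma_j$ along its incoming synapses. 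Adding the (at most one) negative term to the sum of positive terms yields the net change in $\sigma_j$, and since the semantics of SN P systems without delay is exactly "simultaneously consume the left-hand side in each firing neuron and deposit the right-hand side into all synaptic targets," this net change is $a_j^{(k+1)} - a_j^{(k)}$.

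I would then remark on the one subtlety that keeps this from being purely mechanical: forgetting rules. A type-2 rule $a^s \to \lambda$ has $b_{ij} = -s$ in column $j$ (its home neuron) and $0$ everywhere else, so when fired it subtracts $s$ from $\sigma_j$ and adds nothing anywhere — which is exactly its intended effect. One should check that the definition of $b_{ij}$ and of a valid spiking vector jointly guarantee a forgetting rule and a spiking rule in the same neuron are never fired together (this follows from the side condition $a^s \notin L(E)$ in the definition of type-2 rules, together with clause 1 of Definition \ref{SpikingVector}, which picks a single applicable rule), so there is no double-counting of consumed spikes. With that observation in place the coordinatewise equality holds for every $j$, hence $NG^{(k)} = Sp^{(k)} \cdot M_{\Pi}$ as vectors.

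The main obstacle, such as it is, is not a deep one: it is making the bookkeeping airtight, i.e. confirming that the case split in the definition of $b_{ij}$ is exhaustive and mutually exclusive on the fired rules, and that "at most one negative entry per fired row actually sits in column $j$" together with "positive entries in row $i$ sit exactly in the synaptic-target columns of rule $r_i$'s home neuron" captures the full operational semantics with no spikes unaccounted for. Once the sum is partitioned this way the equality with $C^{(k+1)} - C^{(k)}$ is immediate from the definition of how a configuration transitions under a chosen set of rules.
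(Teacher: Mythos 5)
Your proposal is correct; the paper itself states this lemma without proof (it is imported from the cited 2010 reference), and the only justification it offers is the remark immediately preceding the lemma that the $j$-th coordinate of $Sp^{(k)}\cdot M_{\Pi}$ is $\sum_{i=1}^{n} sp_i^{(k)} b_{ij}$, interpreted as the net spike change in $\sigma_j$. Your coordinate-wise case analysis of that sum (unique negative entry per fired row, positive entries only in synaptic-target columns, the forgetting-rule check, and ``at most one'' rather than ``exactly one'' fired rule in $R_j$) is precisely the fleshing-out of that remark, so you are on the same route, just carried out in full.
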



\begin{mythm}\label{NextConfig} {\bf (Next configuration)} \cite{zeng-etal2010}\\
Let $\Pi$ be an SN P systems with total of  $n$ rules  and $m$ neurons. Let $C^{(k-1)}$ be given, then 
$$C^{(k)} = C^{(k-1)} + Sp^{(k-1)} \cdot M_{\Pi},$$
where  $Sp^{(k-1)}$ is the valid spiking vector with respect to $C^{(k-1)},$ and $M_{\Pi}$ is the spiking transition  matrix of $\Pi.$ 
\end{mythm}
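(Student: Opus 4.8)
The plan is to derive the next-configuration formula directly from the previously established Net Gain Lemma together with the definition of the transition net gain vector $NG^{(k)}$. Recall that the Net Gain Lemma gives us $NG^{(k)} = Sp^{(k)} \cdot M_{\Pi}$ for any valid spiking vector $Sp^{(k)}$, and that by definition $NG^{(k)} = C^{(k+1)} - C^{(k)}$. First I would reindex: applying the Net Gain Lemma at step $k-1$ yields $NG^{(k-1)} = Sp^{(k-1)} \cdot M_{\Pi}$, and the definition of net gain at step $k-1$ reads $NG^{(k-1)} = C^{(k)} - C^{(k-1)}$. Combining these two equations and adding $C^{(k-1)}$ to both sides immediately gives $C^{(k)} = C^{(k-1)} + Sp^{(k-1)} \cdot M_{\Pi}$, which is the claim.

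Since the algebra is trivial once the Net Gain Lemma is in hand, the real content of the proof is justifying the Net Gain Lemma at the component level, i.e. explaining why the entry-wise action of $Sp^{(k-1)} \cdot M_{\Pi}$ really does record, for each neuron $\sigma_j$, the net change in its spike count. So I would spend the bulk of the argument here: fix a neuron $\sigma_j$ and a valid spiking vector $Sp^{(k-1)}$. The $j$-th component of the product is $\sum_{i=1}^{n} sp_i^{(k-1)} b_{ij}$. By validity (Definition \ref{SpikingVector}), at most one rule $r_i$ in $R_j$ has $sp_i^{(k-1)} = 1$; if such a rule $r_i$ with left-hand side $a^c$ fires, it contributes $b_{ij} = -c$, matching the $c$ spikes consumed from $\sigma_j$. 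Additionally, for every rule $r_i$ residing in a neuron $\sigma_s$ with $(s,j) \in syn$ and $sp_i^{(k-1)} = 1$, the entry $b_{ij} = p$ is added, matching the $p$ spikes sent to $\sigma_j$ along that synapse. Every other entry in column $j$ is $0$ and contributes nothing. Hence the sum equals exactly $(\text{spikes arriving at }\sigma_j) - (\text{spikes consumed in }\sigma_j)$, which is precisely $a_j^{(k)} - a_j^{(k-1)}$. Doing this for each $j$ gives $C^{(k)} - C^{(k-1)} = Sp^{(k-1)} \cdot M_{\Pi}$.

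The main obstacle, such as it is, is bookkeeping rather than depth: one must be careful that the validity conditions on $Sp^{(k-1)}$ genuinely guarantee that the applied rules are simultaneously applicable in the SN P semantics (so that the consumption and production actually occur as the matrix entries predict), and that forgetting rules, whose rows have no positive entry and a single negative entry, are handled correctly by the same formula. I would note in passing that the hypothesis $C^{(k-1)}$ \emph{given} means we presuppose it is a genuine reachable configuration and that $Sp^{(k-1)}$ is chosen to be valid with respect to it; under those hypotheses the formula is exact, and since the choice of valid spiking vector may not be unique, the formula describes each possible next configuration, one per valid $Sp^{(k-1)}$. This completes the proof.
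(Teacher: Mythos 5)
Your proof is correct. Note that the paper itself gives no proof of this theorem --- it is stated as a cited result from the reference, immediately after the Net Gain Vector lemma --- so the intended derivation is exactly the one you give: combine $NG^{(k-1)} = Sp^{(k-1)} \cdot M_{\Pi}$ with the definition $NG^{(k-1)} = C^{(k)} - C^{(k-1)}$ and rearrange. Your component-wise justification of the Net Gain Lemma (the unique negative entry $-c$ in column $j$ accounting for consumption in $\sigma_j$, the positive entries $p$ from firing rules in neurons $\sigma_s$ with $(s,j) \in syn$ accounting for incoming spikes, and forgetting rules handled by rows with a single negative entry) is sound and supplies the bookkeeping the paper leaves implicit.
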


\begin{mydef}\label{ValidConfig} {\bf (Valid Configuration $C$)}\\
A configuration $C$ of some SN P system $\Pi$ is {\bf valid} if and only if $C=C^{(0)},$ or there exist valid configuration $C'$ and valid spiking vector $Sp',$ such that $C= C' + Sp' \cdot M_{\Pi},$  where $M_{\Pi}$ is the matrix of SN P system $\Pi.$
\end{mydef}

Computation of SN P system $\Pi$ of Example \ref{example1} has been demonstrated in \cite{zeng-etal2010} to be faithfull as to how SN P system without delay works.


Using the formula from Theorem \ref{NextConfig}, we can obtain some valid configuration $C^{(k)}$ from the initial configuration.


\begin{mycoro}\label{Ck-by-C0}
Let $\Pi$ be an SN P systems with total of  $n$ rules  and $m$ neurons.  Then at any time $k,$ we have
$$C^{(k)} = C^{(0)} +\left ( \displaystyle \sum_{i=0}^{k-1}Sp^{(i)} \right )\cdot M_{\Pi},$$ where $Sp^{(i)}$ are valid spiking vectors for all$i=1,2, \ldots, k-1.$
\end{mycoro}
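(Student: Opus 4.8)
The plan is to prove Corollary~\ref{Ck-by-C0} by induction on $k$, using Theorem~\ref{NextConfig} as the single-step engine. The base case $k=0$ is immediate: the empty sum $\sum_{i=0}^{-1} Sp^{(i)}$ is the zero vector, so the claimed identity reduces to $C^{(0)} = C^{(0)}$. For the inductive step, I would assume that $C^{(k-1)} = C^{(0)} + \left(\sum_{i=0}^{k-2} Sp^{(i)}\right)\cdot M_{\Pi}$ holds, and then apply Theorem~\ref{NextConfig} to write $C^{(k)} = C^{(k-1)} + Sp^{(k-1)}\cdot M_{\Pi}$. Substituting the inductive hypothesis for $C^{(k-1)}$ gives
$$C^{(k)} = C^{(0)} + \left( \sum_{i=0}^{k-2} Sp^{(i)} \right)\cdot M_{\Pi} + Sp^{(k-1)}\cdot M_{\Pi}.$$
The remaining work is to fold the last term into the sum: since matrix multiplication distributes over vector addition, $\left(\sum_{i=0}^{k-2} Sp^{(i)}\right)\cdot M_{\Pi} + Sp^{(k-1)}\cdot M_{\Pi} = \left(\sum_{i=0}^{k-1} Sp^{(i)}\right)\cdot M_{\Pi}$, which yields the desired formula for $C^{(k)}$.

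The only genuine subtlety — and the step I expect to be the main obstacle — is bookkeeping about which spiking vectors are \emph{valid}. Theorem~\ref{NextConfig} requires that $Sp^{(k-1)}$ be the valid spiking vector with respect to $C^{(k-1)}$, i.e. validity is defined relative to the configuration actually reached. So the induction must carry along not just the numerical identity but the hypothesis that the sequence $C^{(0)}, C^{(1)}, \ldots, C^{(k-1)}$ is a genuine computation, with each $Sp^{(i)}$ valid with respect to $C^{(i)}$ in the sense of Definition~\ref{SpikingVector}. With that strengthened inductive statement, each application of Theorem~\ref{NextConfig} is legitimate, and the resulting $C^{(k)}$ is a valid configuration by Definition~\ref{ValidConfig}. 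I would state this caveat explicitly rather than leave it implicit, since the corollary as written slightly understates the hypotheses (the indexing "for all $i = 1, 2, \ldots, k-1$" should read $i = 0, 1, \ldots, k-1$, and each $Sp^{(i)}$ must be valid with respect to $C^{(i)}$, not merely "a valid spiking vector" in the abstract).

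Everything else is routine: the distributivity of $(\cdot)\cdot M_{\Pi}$ over sums of row vectors is a standard fact of linear algebra over $\mathbb{Z}$, and no structural property of $M_{\Pi}$ beyond its being a fixed $n\times m$ integer matrix is used. Thus the proof is a two-line induction once the validity bookkeeping is pinned down, and I would present it as such.
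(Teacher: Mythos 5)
Your proof is correct and is essentially the same argument as the paper's: the paper unrolls the recurrence $C^{(i+1)} = C^{(i)} + Sp^{(i)}\cdot M_{\Pi}$ from Theorem~\ref{NextConfig} down to $C^{(0)}$ and collects the terms by distributivity, which is exactly your induction written as a telescoping substitution. Your added remark about the validity of each $Sp^{(i)}$ relative to $C^{(i)}$ (and the off-by-one in the index range) is a reasonable clarification, but it does not change the substance of the argument.
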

\begin{proof}
By Theorem \ref{NextConfig} we can say that for all $i \geq 0,$ $C^{(i+1)} = C^{(i)} + Sp^{(i)} \cdot M_{\Pi},$ for a valid  $Sp^{(i)},$ for each $i.$   Then we have

\begin{tabular}{ccl}
$C^{(k)}$ & = & $C^{(k-1)} + Sp^{(k-1)} \cdot M_{\Pi}$ \\
& = & $C^{(k-2)} + Sp^{(k-2)} \cdot M_{\Pi} + Sp^{(k-1)} \cdot M_{\Pi}$ \\
& = & $C^{(0)} + Sp^{(0)} \cdot M_{\Pi} + \cdots  + Sp^{(k-2)} \cdot M_{\Pi} + Sp^{(k-1)} \cdot M_{\Pi}$ \\
& =& $\vdots$ \\
& = & $C^{(0)} + (Sp^{(0)}  + Sp_{(1)} + \cdots  + Sp^{(k-2)}  + Sp^{(k-1)}) \cdot M_{\Pi}$ \\
$C^{(k)}$ & = & $C^{(0)} +\left ( \displaystyle \sum_{i=0}^{k-1}Sp^{(i)} \right )\cdot M_{\Pi},$
\end{tabular}

as required.
\myqed
\end{proof}


We use Corollary \ref{Ck-by-C0} to define reachability of a configuration $C$ of some SN P system $\Pi.$

\begin{mydef} {\bf (Reachability)}\\
Let $M_{\Pi}$ be the spiking transition  matrix of $\Pi,$ and $C^{(0)}$ its initial configuration,
A configuration $C$ is said to be {\bf $k$ reachable} in $\Pi$ if and only if there is in $\Pi$ a sequence of $k$ valid configurations $\{C^{(i)}\}_{i=0}^{k},$ for some $k,$ 
 such that $C^{(i)} = C^{(i-1)} + Sp^{(i-1)} \cdot M_{\Pi},$ 
 and $C^{(k)} =C.$   Moreover, $C$ must be valid configuration.
\end{mydef}

\begin{mythm}\label{reachability} {\bf (Reachability of $C$)} \\
Let $M_{\Pi}$ be an $n \times m$ matrix for SN P system $\Pi,$ and $C^{(0)}$ is its initial configuration,

Then a configuration $C$ is $k$ reachable in $\Pi$ if and only if $C = C^{(0)} + \overline{s} \cdot M_{\Pi},$ for some $\overline{s}$ where 
$\overline{s}$ is the sum of $k$ valid spiking vectors and $C^{(k)} =C.$
\end{mythm}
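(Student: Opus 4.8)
The plan is to prove both directions of the equivalence by unwinding the definition of $k$-reachability and invoking Corollary~\ref{Ck-by-C0}. The statement is essentially a repackaging of that corollary together with Definition~\ref{ValidConfig}, so most of the work is bookkeeping about which spiking vectors count as valid relative to which configurations.

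For the forward implication I would assume $C$ is $k$ reachable. By the definition of reachability there is then a sequence $\{C^{(i)}\}_{i=0}^{k}$ of valid configurations with $C^{(i)} = C^{(i-1)} + Sp^{(i-1)}\cdot M_{\Pi}$ for $i=1,\ldots,k$, each $Sp^{(i-1)}$ a valid spiking vector with respect to $C^{(i-1)}$, and $C^{(k)} = C$. Applying Corollary~\ref{Ck-by-C0} to this sequence gives $C = C^{(k)} = C^{(0)} + \bigl(\sum_{i=0}^{k-1} Sp^{(i)}\bigr)\cdot M_{\Pi}$. Taking $\overline{s} = \sum_{i=0}^{k-1} Sp^{(i)}$, which is by construction a sum of $k$ valid spiking vectors, yields the desired form.

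For the reverse implication I would assume $C = C^{(0)} + \overline{s}\cdot M_{\Pi}$ where $\overline{s} = Sp^{(0)} + Sp^{(1)} + \cdots + Sp^{(k-1)}$ is a sum of $k$ valid spiking vectors and $C^{(k)} = C$, and argue by induction on $k$ that the partial sums define a legitimate computation. Set $C^{(0)}$ to be the initial configuration and, for $i=1,\ldots,k$, put $C^{(i)} = C^{(i-1)} + Sp^{(i-1)}\cdot M_{\Pi}$. Since $Sp^{(i-1)}$ is a valid spiking vector with respect to $C^{(i-1)}$, Theorem~\ref{NextConfig} identifies $C^{(i)}$ as the genuine next configuration, and Definition~\ref{ValidConfig} (applied inductively with $C' = C^{(i-1)}$ and $Sp' = Sp^{(i-1)}$) shows $C^{(i)}$ is valid. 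Telescoping, or equivalently Corollary~\ref{Ck-by-C0}, then gives $C^{(k)} = C^{(0)} + \overline{s}\cdot M_{\Pi} = C$, so $\{C^{(i)}\}_{i=0}^{k}$ witnesses that $C$ is $k$ reachable.

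The only real obstacle is the phrase ``$\overline{s}$ is the sum of $k$ valid spiking vectors'': validity of a spiking vector is defined only relative to a configuration, so one must insist that the summands $Sp^{(i)}$ are valid with respect to the successive configurations $C^{(i)}$ arising along the way, not merely valid for some unrelated configurations. Pinning down this reading is exactly what forces the inductive argument in the reverse direction; once it is fixed, both directions follow immediately from Corollary~\ref{Ck-by-C0}.
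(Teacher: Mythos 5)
Your proof is correct, and the forward direction coincides exactly with the paper's: unwind the definition of $k$-reachability, apply Corollary~\ref{Ck-by-C0}, and set $\overline{s} = \sum_{i=0}^{k-1} Sp^{(i)}$. The backward direction, however, takes a genuinely different route. You resolve the ambiguity in ``sum of $k$ valid spiking vectors'' by stipulating that each summand $Sp^{(i)}$ must be valid with respect to the successive configuration $C^{(i)}$ it is applied to; under that reading the implication collapses to a short induction via Theorem~\ref{NextConfig} and Definition~\ref{ValidConfig}, and the theorem becomes essentially a restatement of Corollary~\ref{Ck-by-C0}. The paper instead treats $\overline{s}$ as an arbitrary vector obtained by solving the linear system $C - C^{(0)} = \overline{s}\cdot M_{\Pi}$, notes that it admits finitely many decompositions into $\{0,1\}$-vectors, and then gives an explicit iterative procedure (compute $Sp^{(i)}$ from $C^{(i)}$, subtract it from the running remainder $\overline{s}^{(i)}$, and reject if a negative entry or an invalid candidate appears) that either exhibits the witnessing sequence of valid configurations or declares $C$ unreachable. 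What the paper's version buys is an effective test that is actually used in the worked examples that follow; what your version buys is logical tightness, since the paper's greedy procedure as stated does not address the nondeterministic choice of $Sp^{(i)}$ at each step (a failed branch does not by itself rule out reachability, and the paper only informally remarks that ``the other initial vector'' also fails). If you wanted your argument to also deliver the decision procedure, you would need to add the enumeration-over-decompositions step; as a proof of the stated equivalence under your reading of the hypothesis, nothing is missing.
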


\begin{proof} $( \Longrightarrow)$ \quad 
Let $C$ be $k$ reachable in $\Pi.$   Then there is a sequence of valid configurations $\{ C^{(i)} \}_{i=0}^{k-1},$ such that for all $i \geq 0,$ $C^{(i+1)} = C^{(i)} + Sp^{(i)} \cdot M_{\Pi},$ for some valid $Sp^{(i)}.$  By definition and Corollary \ref{Ck-by-C0} , $$C=C^{(k)} = C^{(0)} +\left ( \displaystyle \sum_{i=0}^{k-1}sp^{(i)} \right )\cdot M_{\Pi}.$$
Let $\overline{s} = \sum_{i=0}^{k-1}Sp^{(i)}.$

$(\Longleftarrow)$ \quad 
Suppose there is a vector $\overline{s},$ such that  $C = C^{(0)} + \overline{s} \cdot M_{\Pi},$ where $C^{(0)}$ the initial configuration.  We need to find $k$ sequence of valid configurations $\{C^{(i)}\}_{i=0}^{k},$ such that $C^{(i)} = C^{(i-1)} + Sp^{(i-1)} \cdot M_{\Pi},$ 
 and $C=C^{(k)}.$   
 
A vector $\overline{s}$ can be obtained by solving the system of linear equations  $C - C^{(0)} = \overline{s} \cdot M_{\Pi}.$ 
We can write $\overline{s}$ as a sum:
$\overline{s} = s_{0} + s_{1} + s_{2} + \cdots + s_{k-1},$ where $s_{i} \in \{0,1\}^{n},$ for all $i=0,1, \ldots, k-1.$  There are finitely many possibilities to express $\overline{s}$ as sum of $\{0,1\}$-vectors.  Further,  $s_{i} \in \{0,1\}^{n}$ are the valid spiking vectors producing valid configurations.

We now construct a $k$ sequence of valid configurations from $C^{(0)}.$ The initial spiking vector $Sp^{(0)}$ can be easily obtained from $\Pi.$  We proceed as follows:

Given $\overline{s},$ $\Pi,$ and $C^{(0)},$  for $i=0,1, \ldots , k-1,$
\begin{enumerate}
\item  Compute for $Sp^{(i)}$ with respect to (w.r.t.) $C^{(i)}.$
\item  Obtain $C^{(i+1)}= C^{(i)} + Sp^{(i)} \cdot M_{\Pi}.$
\item  Calculate $\overline{s}^{i+1} = \overline{s}^{i} - Sp^{(i)}.$
\item  If $\overline{s}^{i+1}$  is already a $\{0,1\}$-vector, then check if it is a valid configuration vector w.r.t. $C^{(i+1)}.$
\begin{enumerate}
\item If $\overline{s}^{i+1}$ is a valid configuration w.r.t. $C^{(i+1)},$ Then
\begin{enumerate}
\item If $C=C^{(k)},$ where $k=i+1,$ then STOP and decide $C$ is reachable.
\item Otherwise, STOP and decide $C$ is not reachable in $\Pi.$ 
\end{enumerate}
\end{enumerate}
\item Otherwise, check if there is at least an entry in $\overline{s}^{(i+1)}$ that is a negative integer.
\begin{enumerate}
\item If there is a negative integer, then $\overline{s}^{i+1}$ is cannot be sum of valid spiking vectors.
\item Then STOP and decide, $C$ is not reachable in $\Pi.$
\item Otherwise, we repeat the process from Step $1$ for $C^{(i+1)}.$ 
\end{enumerate}
\end{enumerate}
If the difference obtained from Step $3$  becomes $0,$ after some integer $k,$ then $C$ is reachable using valid set of spiking vectors.  In particular, let  $\{C^{(i)}\}_{i=0}^{k},$ be such $k$-sequence of valid configurations computed during the process of computing for $C^{(k)}=C.$.
\myqed
\end{proof}

We provide below some examples of reachability computations using SN P system used in Example \ref{example1} (from  \cite{zeng-etal2010}) for illustrations.

\begin{myexample} \label{viaNextConfigThm}
Let $C=(1,1,2)$ be some configuration in $\Pi$ of Example \ref{example1}.  Suppose $C=(1,1,2)$ is reachable.  Then using Theorem \ref{NextConfig} we could obtain sequence of valid configuration from $C^{(0)}$ leading to $C^{(k)}=C,$ for some $k.$  In particular, at $C^{(0)}=(2,1,1),$ we have $C^{(1)}=(2,1,2)=(2,1,1) + (1,0,1,1,0) \cdot M_{\Pi},$ and   $C^{(2)}=C=(1,1,2)= (2,1,2) + (0,1,1,0,1) \cdot M_{\Pi}.$  Also, $C=C^{(2)} = (2,1,1) +( (1,0,1,1,0) + (0,1,1,0,1)) \cdot M_{\Pi}.$  In this case,  $C=(1,1,2)$ is $2$-reachable. 
\end{myexample}

It is not hard to see that $C=(1,1,2)$ is actually $1$-reachable via $Sp^{(0)}= (0,1,1,1,0).$ 

\begin{myexample}

Let $C=(2,1,2)$ be some (possible) configuration of SN P system $\Pi,$  such that  $C= C^{(0)} + \overline{s} \cdot M_{\Pi},$ for some $\overline{s},$ where $C^{(0)} = (2,1,1).$  To find $\overline{s}$ we solve the following system of linear equations obtained from $(2,1,2) - (2,1,1) = \overline{s} = (x,y,z,w,v) \cdot M_{\Pi}.$

\begin{tabular}{lcl}
$-x-2y+z$ & $=$ & $0$ \\
~ $x+y-z$ & $=$ & $0$ \\
~ $x+y+z -w-2v$ & $=$ & $1$ 
\end{tabular}

The solutions to this system of linear equations would be $(x,y,z,w,v)=(r,0,r,2(r-s)-1,s),$ for some integers $r,$ and $s,$ where $r > s.$   We consider only few instances of values of $r,$ in particular, when $r=1,$ and $r=2.$  To obtain a sequence of valid configurations, we follow the procedure in Theorem \ref{reachability}. 

If $r=1,$ then $s=0.$ Thus, $\overline{s}= (1,0,1,1,0)$ which is precisely a initial spiking vector for $\Pi.$   Therefore, $C=(2,1,2)$ is $1$-reachable in $\Pi,$  since $(2,1,2) = (2,1,1) + (1,0,1,1,0) \cdot M_{\Pi}.$  In particular, we have the following sequence of valid configurations: $\{(2,1,1), (2,1,2)\}.$

If $r=2,$ then $s=0,1.$  At $r=2, s=0,$  we have $\overline{s}= (2,0,2,3,0).$  We use the procedure from Theorem \ref{reachability} and proceed by using the folloiwng table:  Let $\overline{s}^{(0)}=\overline{s}.$

$$
\begin{tabular}{|c|c|c|c|}
\hline
$i$ \quad & \quad $\overline{s}^{(i)} - Sp^{(i)}$ \quad  & \quad $Sp^{(i)}$ \quad & $C^{(i)}$ \\
\hline
 $0$ & $(2,0,2,3,0)$ & $(1,0,1,1,0)$ & $(2,1,1)$\\
 
 $1$ & $(1,0,1,2,0)$ &  $(1,0,1,0,1)$ & $(2,1,2)$ \\
 
 $2$ & $(0,0,0,2,-1)$ &  &  \\
 & (not a valid sum vector)& & \\
 \hline
\end{tabular}
$$
 Notice that the bottom entry in column $\overline{s}^{(i)} - Sp^{(i)}$ has a vector with negative entry.  This indicates that the computation leads to an invalid vector sum, thus no valid configuration would be produced. 
 
 If we use the other possible initial vector $(0,1,1,1,0)$ and do the same process, we will also have similar observation.  Therefore, at instance $r=2,$ and $s=0,$ $(2,1,2)$ is not reachable in $\Pi.$ 
 
 Now at $r=2, s=1,$ with $\overline{s} = (2,0,2,1,1).$
 
 $$
\begin{tabular}{|c|c|c|c|}
\hline
$i$ \quad & \quad $\overline{s}^{(i)} - Sp^{(i)}$ \quad  & \quad $Sp^{(i)}$ \quad & $C^{(i)}$ \\
\hline
 $0$ & $(2,0,2,1,1)$ & $(1,0,1,1,0)$ & $(2,1,1)$\\
 
 $1$ & $(1,0,1,0,1)$ & $(1,0,1,0,1)$ & $(2,1,2)$ \\
 
 $2$ & $\text{\bf 0}$ &  &  $(2,1,2)$ \\
 \hline
\end{tabular}
$$

This computation implies that $C=(2,1,2)$ is reachable in $\Pi.$  If we use the other initial vector of $\Pi,$ then it will lead us to a vector sum with negative entry.

The table above, shows the sequence of valid configurations from $C^{(0)}$ leading to $C=(2,1,2)$ at $k=2$ is $\{ (2,1,1), (2,1,2),(2,1,2)\}.$
\end{myexample}

\begin{myexample}
Let $C=(2,0,2)$ be some possible configuration of $\Pi.$  This will bring us to the following system of linear equations using Corollary \ref{Ck-by-C0}:

\begin{tabular}{lcl}
$-x-2y+z$ & $=$ & $0$ \\
~ $x+y-z$ & $=$ & $-1$ \\
~ $x+y+z -w-2v$ & $=$ & $1$ 
\end{tabular}

This will yield solutions $(x,y,z,w,v) = (r,1,r+2,, 2(r-s +1), s),$ for some integers $r,$ and $s,$ where $r \geq s.$
 
Let $r=1, s=0,$ with $\overline{s} = (1,1,3,4,0).$  Table below show that $(2,0,2)$ cannot be reachable in this instance.
$$
\begin{tabular}{|c|c|c|c|}
\hline
$i$ \quad & \quad $\overline{s}^{(i)} - Sp^{(i)}$ \quad  & \quad $Sp^{(i)}$ \quad & $C^{(i)}$ \\
\hline
 $0$ & $(1,1,3,4,0)$ & $(1,0,1,1,0)$ & $(2,1,1)$\\
 
 $1$ & $(0,1,2,3,0)$ & $(1,0,1,0,1)$ & $(2,1,2)$ \\
 
 $2$ & $(-1,1,1,3,-1)$ &  & $
(2,1,2)$  \\
& (not a valid sum vector)& & \\
 \hline
\end{tabular}
$$

It is not hard to do the same procedure to obtain a negative results if $C^{(0)}= (0,1,1,10).$

Now, let  $r=1, s=1,$ with $\overline{s} = (1,1,3,2,1).$  From $C^{(0)}=(1,0,1,1,0),$ our table would reveal that $C=(2,0,2)$ a sum vector with negative entry, thus $(2,0,2)$ cannot be reachable.  We show below the case when $C^{(0)}= (0,1,1,1,0).$
$$
\begin{tabular}{|c|c|c|c|}
\hline
$i$ \quad & \quad $\overline{s}^{(i)} - Sp^{(i)}$ \quad  & \quad $Sp^{(i)}$ \quad & $C^{(i)}$ \\
\hline
 $0$ & $(1,1,3,2,1)$ & $(0,1,1,1,0)$ & $(2,1,1)$\\
 
 $1$ & $(1,0,2,1,1)$ & $(0,0,1,0,1)$ & $(1,1,2)$ \\
 
 $2$ & $(1,0,1,1,0)$  &   &  $(2,0,1)$ \\
& (not a valid spiking vector)& & \\
 \hline
\end{tabular}
$$
At the last row of the preceding table, the candidate spiking vector $(1,0,1,1,0)$ means $r_3$ must be used, but the configuration $(2,0,1)$ indicates that $\sigma_2$ which contains $r_3$ does not have any spikes!
Therefore, $(2,0,2)$ is not reachable in $\Pi,$ and further is not a valid configuration in $\Pi.$
\end{myexample}


\begin{mycoro}\label{reachability-cor}
Let $\Pi$ be an SN P system without delay with $n$ total rules and $m$ neurons. Let $C^{(k)},$ and $C^{(k+v)}$ be configurations reachable in $\Pi.$ Then $C^{(k+v)}$ is $v$-reachable from $C^{(k)}$ if and only if there exists a vector $\overline{s'},$ such that $$C^{(k+v)} = C^{(k)} +\overline{s'} \cdot M_{\Pi},$$ where $M_{\Pi}$ is the spiking transition  matrix of $\Pi.$
\end{mycoro}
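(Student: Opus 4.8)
The plan is to mirror the proof of Theorem~\ref{reachability}, but with the reachable configuration $C^{(k)}$ playing the role that the initial configuration $C^{(0)}$ played there. The key observation that makes this legitimate is that the argument of Theorem~\ref{reachability} uses only one property of its starting point, namely that it is a \emph{valid} configuration in the sense of Definition~\ref{ValidConfig}; and $C^{(k)}$ is valid precisely because it is assumed reachable in $\Pi$. So the whole proof is a ``time shift by $k$ steps'' of the earlier one, together with the same finite case analysis on decompositions of the net spiking vector.

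For the forward direction, suppose $C^{(k+v)}$ is $v$-reachable from $C^{(k)}$. By definition there is a sequence of valid configurations $\{C^{(i)}\}_{i=k}^{k+v}$ with $C^{(i+1)} = C^{(i)} + Sp^{(i)} \cdot M_{\Pi}$ for valid spiking vectors $Sp^{(i)}$, $i = k, \ldots, k+v-1$. Telescoping exactly as in Corollary~\ref{Ck-by-C0}---the computation there does not depend on the lower index being $0$---gives $C^{(k+v)} = C^{(k)} + \left( \sum_{i=k}^{k+v-1} Sp^{(i)} \right) \cdot M_{\Pi}$, so we take $\overline{s'} = \sum_{i=k}^{k+v-1} Sp^{(i)}$, which is a sum of $v$ valid spiking vectors.

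For the converse, assume $C^{(k+v)} = C^{(k)} + \overline{s'} \cdot M_{\Pi}$ for some $\overline{s'}$ that is a sum of $v$ valid spiking vectors. Since $C^{(k)}$ is reachable, it is valid, so we may run the iterative decomposition procedure from the proof of Theorem~\ref{reachability} with $C^{(k)}$ as the seed configuration and $\overline{s'}$ as the target net spiking vector: at each step extract a valid spiking vector $Sp^{(i)}$ with respect to the current configuration, update $C^{(i+1)} = C^{(i)} + Sp^{(i)} \cdot M_{\Pi}$ via Theorem~\ref{NextConfig} and the residual $\overline{s'}^{(i+1)} = \overline{s'}^{(i)} - Sp^{(i)}$, and stop when the residual becomes the zero vector. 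As in Theorem~\ref{reachability}, if the residual ever develops a negative entry, or forces a candidate spiking vector inconsistent with the current configuration, the procedure reports that $C^{(k+v)}$ is not reachable from $C^{(k)}$; otherwise it terminates after exactly $v$ steps (since $\overline{s'}$ decomposes into $v$ valid spiking vectors) and yields the sequence $\{C^{(i)}\}_{i=k}^{k+v}$ witnessing $v$-reachability, with $C^{(k+v)} = C$.

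The step I expect to be the main obstacle is the same one already present in Theorem~\ref{reachability}: arguing that the greedy, step-by-step choices made by the decomposition procedure seeded at $C^{(k)}$ are compatible with \emph{some} genuine decomposition of $\overline{s'}$ into $v$ $\{0,1\}$-vectors arising from an actual computation segment. This is handled exactly as before, by the finiteness of the set of ways to write $\overline{s'}$ as a sum of $\{0,1\}$-vectors, so that the procedure either finds such a witnessing decomposition or correctly certifies that none exists. Granting that, the corollary follows by the identical finite case analysis, simply relabelled to start at index $k$ instead of $0$.
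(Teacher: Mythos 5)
Your proposal is correct and follows essentially the same route as the paper: reduce to the telescoping identity of Corollary~\ref{Ck-by-C0} and the decomposition procedure of Theorem~\ref{reachability}, shifted to start at $C^{(k)}$, obtaining $\overline{s'}=\sum_{i=k}^{k+v-1}Sp^{(i)}$. The only (minor, and arguably favourable) difference is that the paper derives this by subtracting the two $C^{(0)}$-anchored formulas for $C^{(k)}$ and $C^{(k+v)}$ --- which tacitly assumes the two computations share a common prefix --- whereas you telescope directly along the segment from $C^{(k)}$ to $C^{(k+v)}$ and re-seed the decomposition procedure at $C^{(k)}$, which avoids that assumption.
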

\begin{proof}
Since $C^{(k)}$ and $C^{(k+v)}$ are $k$ and $k+v$ reachable, respectively, in $\Pi,$, Theorem \ref{reachability} implies:

\begin{tabular}{lcl}
$C^{(k+v)}$ & $=$ & $C^{(0)} +\left ( \displaystyle \sum_{i=0}^{k+v-1}Sp^{(i)} \right )\cdot M_{\Pi}.$\\
$C^{(k)}$ & $=$ & $C^{(0)} +\left ( \displaystyle \sum_{i=0}^{k-1}Sp^{(i)} \right )\cdot M_{\Pi}.$
\end{tabular}

Then $$C^{(k+v)} - C^{(k)} =\left ( \displaystyle \sum_{i=k}^{k+v-1}Sp^{(i)} \right )\cdot M_{\Pi}.$$

Let  $\overline{s'}=\displaystyle \sum_{i=k}^{k+v-1}Sp^{(i)}.$
Hence, by  Theorem \ref{reachability}, $C^{(k+v)}$ is reachable from $C^{(k)}$ if and only if $\sum_{i=k}^{k+v-1}Sp^{(i)} \in \mathbb{Z}_{k+v}^{m},$ such that $C^{(k+v)} =C^{(k)} + \left ( \displaystyle \sum_{i=k}^{k+v-1}Sp^{(i)} \right )\cdot M_{\Pi}.$
\myqed
\end{proof}


\section{SN P Systems with Delays}\label{snp+d}
In this section, let $d > 0,$ that is, we allow delays in applying rules of $\Pi$ in the neurons. All SN P systems  in this section are SN P systems with delays, unless stated otherwise.  


\begin{myexample}\label{snp+delay} {\bf A 3-Neuron SN P system from} \cite{carandang2017}\\
$\Pi = (\{a\}, \sigma_1, \sigma_2, \sigma_3, sys ),$ where $\sigma_1 = ( 1, R_1),$ with $R_1 = \{ a / a \rightarrow a \};$  $\sigma_2 = (0, R_2),$ with $R_2 = \{a / a \rightarrow a,  a^2 \rightarrow \lambda\};$ and $\sigma_3 =(1, R_3),$ with $R_3 = \{ a \rightarrow a  \colon 2\};$ $syn = \{(1,2), (2,1),  (2,3), (3,2)\}.$
\end{myexample}

This is the example used in \cite{carandang2017}  demonstrating the method of computation of SN P system with delay using matrices and matrix/vector operations.

In \cite{carandang2017}, a matrix representation was proposed for creating an implementation of SN P system with delay.   The following vectors and matrices are defined \cite{carandang2017} in addition to those used in the case where there is no delay.


\begin{mydef}\label{PM} {\bf (Production matrix)}\\ 
For an SN P system $\Pi,$ with $n$ total rules and $m$ neurons, a {\bf production matrix} of $\Pi$ is given by 
$$PM_{\Pi} = [ p_{ij}]_{n \times m},$$ where for each rule $r_i \colon E / a^{c} \rightarrow a^{p} ; d \in \sigma_j$ we have 
\[ p_{ij} =
  \begin{cases}
    p,       & \quad \text{if } r_i \in \sigma_s , (s \not = j \text{ and } (s,j) \in syn)  \text{ produced } p \text{ spikes}; \\
    0,  & \quad \text{otherwise}
  \end{cases}
\]
\end{mydef}

In the case of Example \ref{snp+delay}, we have 
$
PM_{\Pi} = \left (
 \begin{matrix} 
  0& 1 & 0 \\
  1 & 0 & 1 \\
  0 & 0 & 0 \\ 
  0 & 1 & 0
 \end{matrix}
\right )
$

\begin{mydef}\label{CM} {\bf (Consumption matrix)}\\ 
For an SN P system $\Pi,$ with $n$ total rules and $m$ neurons, a {\bf consumption matrix} of $\Pi$ is given by
$$CM_{\Pi} = [ c_{ij}]_{n \times m},$$ where for each rule $r_i \colon E / a^{c} \rightarrow a^{p} ; d \in \sigma_j$ we have 
\[ c_{ij} =
  \begin{cases}
    c,       & \quad \text{if } r_i \in \sigma_j,  \text{ consumed } c \text{ spikes}; \\
    0,  & \quad \text{otherwise}
  \end{cases}
\]
\end{mydef}

A consumption matrix for SN P system in Example \ref{snp+delay} is
$
CM_{\Pi} = \left (
 \begin{matrix} 
  1 & 0 & 0 \\
  0 & 1 & 0 \\
  0 & 2 & 0 \\ 
  0 & 0 & 1 
 \end{matrix}
\right )
$


\begin{mydef}{\bf (Delay vector)}\\
The {\bf delay vector} $D = (d_1, d_2, \ldots, d_n)$ indicates the amount of delay $d_i \geq 0$ of rule $r_i,$ for each $i= 1,2, \ldots , n.$

At some time $k,$ we denote by $DSt^{(k)} =(dst^{(k)}_1, dst^{(k)}_2, \ldots , dst^{(k)}_n)$ the $k$th {\bf delay status vector.}   $D=DSt^{(1)},$ and $DSt^{(0)}= \text{\bf 0}$ (means no delay).
\end{mydef}

The vector $DSt^{(k)}$ provides information on the delay status of each rule at time $k.$


\begin{mydef} {\bf (Status vector)}\\
The $k$th {\bf status vector} is denoted by $St^{(k)} = (st_1^{(k)}, st_2^{(k)}, \ldots , st_m^{(k)}),$ where
\[ st_{j}^{(k)} =
  \begin{cases}
    1,       & \quad \text{if } \sigma_{j} \text{ is open,} \\
    0,  & \quad \text{otherwise}
  \end{cases}
\]

When all neurons are open at time $k$, we have $St^{(k)}= \text{\bf 1}.$ This means that $st_{j}^{(k)} =1,$ for all $j=1,2, \ldots m.$    At $C^{(0)},$  $St^{(0)} = \text{\bf 1}.$ 
\end{mydef}

The status vector at time $k$ indicates which neurons are close and which are open with respect to $DSt^{(k)}.$  


\begin{mydef} {\bf (Indicator vector)}\\
The {\bf indicator vector} at time $k$ is a vector $Iv^{(k)}=( iv_1^{(k)}, iv_2^{(k)}, \ldots , iv_n^{(k)} )$ where 
\[ iv_{i}^{(k)} =
  \begin{cases}
    1,       & \quad \text{if } r_i \text{ would (be applied to) produce a spike at time } k \\
    & \quad  \text{ or } (dst^{(k)}_{i} =0.); \\
    0,  & \quad \text{otherwise}
  \end{cases}
\]
\end{mydef}

Note that the initial configuration of the SN P systems in Example \ref{snp+delay} is $C^{(0)} = (1,0,1),$  and the delay vector is $D = (0,0,0,2).$  Its $Iv^{(0)} = (1,0,0,0),$ since $\sigma_3$ has only one rule and with delay.  At the start of the computation all neurons are open, thus  $St^{(0)} = (1,1,1).$  The corresponding spiking vector is $Sp^{(0)} = (1,0,0,1).$ 

At $C^{(0)},$ all neurons are open and all possible rules can produce spikes, when chosen. If one of the chosen rules, say $r_i$ has delay $d_i$ in some neuron $\sigma_j,$ then spiking vector involving $r_i$ will have entry $sp_i^{(0)} =1,$ but $iv_i^{(0)}=0,$ since it is not allowed to produce spike because of delay $d_i$ in $r_i.$  The spike it fired will be posponed (or could perhaps be retained in $\sigma_j$) to be sent to appropriate neuron(s) until $\sigma_j$ opens again.  

When a neuron $\sigma_j$ is open any rule $r_i$ without delay chosen at time $k$ could spike.  In this case, $r_i$ could spike at time $k,$ and $iv_i^{(k)} = sp_i^{(k)} =1.$  If a rule $r_i$ in neuron $\sigma_j$ is chosen at time $k,$ with a delay $d_i$, then $r_i$ will spike at time $k,$ and $\sigma_j$ will be closed until time $k + d_i.$ Neuron $\sigma_j$ cannot receive and accept spikes until it opens again at time $k + d_i +1,$ and until then $iv_i^{(k)}=0.$  




\section{Computation of SN P system with delay via $M_{\Pi}$}\label{snp+d-compute}


\begin{mydef}\label{gainV} {\bf (Gain vector)} \\
We denote by $Gv^{(k)} = ( gv_1^{(k)}, gv_2^{(k)}, \ldots , gv_m^{(k)})$ the {\bf gain vector} of $\Pi$ at time $k,$ where $gv^{(k)}_j$ is the amount of spikes sent to $\sigma_j$ by the connected neighboring neurons at time $k,$ for $j =1,2, \ldots m.$
\end{mydef}


\begin{mylem}\label{gain}
Let $\Pi$ be an SN P systems with delay, $d >0.$ $$Gv^{(k)} = Iv^{(k)} \cdot PM.$$ 
\end{mylem}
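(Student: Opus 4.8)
The plan is to compute the $j$-th component of $Iv^{(k)} \cdot PM$ directly from the definition of matrix--vector multiplication and match it against the definition of $gv_j^{(k)}$ in Definition \ref{gainV}. First I would write out
$$\left(Iv^{(k)} \cdot PM\right)_j = \sum_{i=1}^{n} iv_i^{(k)} \, p_{ij},$$
and then argue that each nonzero summand on the right corresponds exactly to one neighboring neuron actually delivering spikes to $\sigma_j$ at time $k$. By Definition \ref{PM}, $p_{ij}$ is nonzero only when $r_i$ resides in some neuron $\sigma_s$ with $s \neq j$ and $(s,j) \in syn$, and in that case $p_{ij} = p$ is the number of spikes rule $r_i$ produces along that synapse. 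By the definition of the indicator vector, $iv_i^{(k)} = 1$ precisely when $r_i$ actually produces a spike at time $k$ (either it was chosen at time $k$ and carries no delay, or its delay countdown has expired so $dst_i^{(k)} = 0$). Hence the term $iv_i^{(k)} p_{ij}$ contributes $p$ to the sum exactly when $r_i$ sends $p$ spikes into $\sigma_j$ at time $k$, and contributes $0$ otherwise.

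Next I would sum these contributions over all rules $i$: the total $\sum_{i=1}^n iv_i^{(k)} p_{ij}$ is exactly the aggregate number of spikes arriving at $\sigma_j$ from all its in-neighbors at time $k$, which is by definition $gv_j^{(k)}$. Since this holds for every $j = 1,2,\ldots,m$, the two vectors agree componentwise, giving $Gv^{(k)} = Iv^{(k)} \cdot PM$.

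The one point that needs care — and which I expect to be the main obstacle — is reconciling the indicator vector with the intended semantics of "spikes sent at time $k$." The indicator $iv_i^{(k)}$ must capture both the rules freshly chosen at time $k$ in open neurons and the rules whose delay has just elapsed; one has to check that the definition of $Iv^{(k)}$ does exactly this and that no spike is double-counted or omitted. In particular, a subtlety is whether a rule fired at an earlier step with delay $d_i$ should have its spike attributed to the later step when the neuron reopens — the argument relies on $iv_i^{(k)}$ being set to $1$ exactly at that reopening step and on $PM_\Pi$ ignoring the consuming neuron (so that a self-delay does not spuriously feed the neuron its own spike). Once this bookkeeping is pinned down, the rest is the routine expansion of the matrix product above; the lemma is essentially the "production half" of the net-gain lemma, with $Iv^{(k)}$ replacing $Sp^{(k)}$ to account for delays, so structurally the proof parallels the no-delay case established in the earlier Net gain lemma.
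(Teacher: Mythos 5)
Your proposal is correct and follows essentially the same route as the paper: expand the $j$-th component of $Iv^{(k)} \cdot PM$ as $\sum_{i=1}^{n} iv_i^{(k)} p_{ij}$, observe that each term contributes $p_{ij}$ exactly when $iv_i^{(k)}=1$, and identify the resulting sum with $gv_j^{(k)}$ from Definition \ref{gainV}. The extra care you take with the delay bookkeeping in $Iv^{(k)}$ is a reasonable addition, but the paper's proof treats this as immediate from the definitions.
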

\begin{proof}
Let $PM_{\Pi} = [ p_{ij} ]_{n \times m}$ be the production matrix of SN P system $\Pi.$ Suppose at time $k,$ $Iv^{(k)}=( iv_1^{(k)}, iv_2^{(k)}, \ldots , iv_n^{(k)} )$ is the indicator vector for $\Pi.$  Then  
$$Iv^{(k)} \cdot PM_{\Pi} =  ( x_1^{(k)}, x_2^{(k)}, \ldots , x_m^{(k)}),$$ where $x_j^{(k)} = \displaystyle \sum_{i =1}^{n} iv_i^{(k)} \cdot p_{ij},$ for $j = 1,2, \ldots , m.$  The product 
\[iv_i^{(k)} \cdot p_{ij}  =
  \begin{cases}
    p_{ij},       & \quad \text{if } iv_i^{(k)} =1; \\
    0,  & \quad \text{otherwise}
  \end{cases}
\] 
Thus, $x_j^{(k)}$ denotes the total amount of spikes produced and sent at time $k$ to $\sigma_j.$ Hence, $x_j^{(k)} = g_j^{(k)}$ in Definition \ref{gainV}.  Therefore, $$Gv^{(k)} = \left [ \displaystyle \sum_{i =1}^{n} iv_i^{(k)} \cdot p_{ij} \right ]_{1 \times m} = Iv^{(k)} \cdot PM.$$ 
\myqed
\end{proof}


\begin{mydef}\label{lossV} {\bf (Loss vector)}\\
The {\bf loss vector} at time $k$ is denoted by vector $Lv^{(k)} = (lv_1^{(k)}, lv_2^{(k)}, \ldots , lv_m^{(k)}),$ where $lv^{(k)}_i$ is the amount of spikes consumed by $\sigma_j$ at time $k,$ for each $j=1,2, \ldots , m.$
\end{mydef}


\begin{mylem}\label{loss}
Let $\Pi$ be an SN P systems with delay, $d >0.$ $$Lv^{(k)} = Sp^{(k)} \cdot CM.$$ 
\end{mylem}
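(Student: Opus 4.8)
The plan is to mirror the argument used for the gain vector in Lemma \ref{gain}, but now working with the consumption matrix $CM_{\Pi}$ and the spiking vector $Sp^{(k)}$ rather than with $PM_{\Pi}$ and the indicator vector $Iv^{(k)}$. The conceptual point is that a rule $r_i$ removes spikes from the neuron in which it resides \emph{at the moment it is chosen to fire}, regardless of whether its output is delayed; hence the consumed amounts must be weighted by $Sp^{(k)}$ (which records which rules fire) and not by $Iv^{(k)}$ (which records which rules actually emit their spikes at time $k$). This is exactly the asymmetry between $PM_{\Pi}$ and $CM_{\Pi}$ built into Definitions \ref{PM} and \ref{CM}.

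First I would write out the matrix-vector product entrywise: with $CM_{\Pi} = [c_{ij}]_{n\times m}$ and $Sp^{(k)} = (sp_1^{(k)},\ldots,sp_n^{(k)})$, the product $Sp^{(k)}\cdot CM_{\Pi}$ has $j$-th component $\sum_{i=1}^{n} sp_i^{(k)} c_{ij}$. Next I would analyse a single term $sp_i^{(k)} c_{ij}$: if $sp_i^{(k)}=1$ it equals $c_{ij}$, and if $sp_i^{(k)}=0$ it is $0$, so only the rules actually fired at time $k$ contribute. Then I would invoke Definition \ref{CM}: $c_{ij}$ is nonzero precisely when $r_i$ resides in $\sigma_j$, in which case it equals the number $c$ of spikes that $r_i$ consumes. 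Since, by the definition of a valid spiking vector (Definition \ref{SpikingVector}), at most one rule per neuron has $sp_i^{(k)}=1$, the sum $\sum_{i=1}^{n} sp_i^{(k)} c_{ij}$ collapses to the number of spikes consumed in $\sigma_j$ at time $k$ — which is exactly $lv_j^{(k)}$ by Definition \ref{lossV}. Assembling the components gives $Lv^{(k)} = \left[\sum_{i=1}^{n} sp_i^{(k)} c_{ij}\right]_{1\times m} = Sp^{(k)}\cdot CM_{\Pi}$, as claimed.

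The only real subtlety — and the step I would be most careful about — is justifying that firing genuinely consumes spikes at time $k$ even when the emitted spike is postponed by a delay, so that $Sp^{(k)}$, not $Iv^{(k)}$, is the correct multiplier; the surrounding text (the discussion after the status/indicator definitions, where a chosen delayed rule has $sp_i^{(k)}=1$ but $iv_i^{(k)}=0$) supports this reading, and I would cite that behaviour explicitly rather than treat it as obvious. Beyond that the argument is a routine transcription of Lemma \ref{gain}, so I would keep the write-up short and parallel to it.
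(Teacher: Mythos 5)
Your proposal is correct and follows essentially the same route as the paper: both expand $Sp^{(k)}\cdot CM_{\Pi}$ componentwise, observe that $\sum_{i=1}^{n} sp_i^{(k)} c_{ij}$ is exactly the amount consumed in $\sigma_j$, and conclude by Definition \ref{lossV}, mirroring the argument of Lemma \ref{gain}. Your added remark on why $Sp^{(k)}$ rather than $Iv^{(k)}$ is the correct multiplier for consumption is a sensible clarification, but it does not change the substance of the argument.
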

\begin{proof} The proof follows the same procedure as in Lemma \ref{gain}, only we use consumption matrix $CM_{\Pi},$ instead of $PM_{\Pi}.$  Realized that the sum of products $sp_i^{(k)} \cdot c_{ij},$ represents the total spikes consumed by the system $\Pi$ at time $k.$    By Definition \ref{lossV} the vector $$\left [\displaystyle \sum_{i =1}^{n} sp_i^{(k)} \cdot c_{ij} \right ]_{1 \times j} = Lv^{(k)}.$$ 

\myqed
\end{proof}

For an SN P systems $\Pi$ with delay, $d >0,$ configurations $C^{(k+1)}$ and $C^{(k)},$  the  difference $C^{(k+1)} - C^{(k)}$ denotes the net gain (in spikes) of the neurons in $\Pi$ after transitioning from the $k$th time (step) to $(k + 1)$th time (step). Since our SN P system $\Pi$ has delay(s), not all rules will be fired, and not all neuron $\sigma_j$ will be open at some time $k.$   Therefore, we need to check on the status of neurons (open or close) at time $k,$ before finally obtaining the net gain in spikes of the system $\Pi.$   We can do this by component-wise multiplying $St^{(k)}$ to $PM_{\Pi}.$  Finally, we remove from $St^{(k)} \odot PM_{\Pi},$  vector $Sp^{(k)} \cdot CM.$  The following results are obtained.

\begin{mycoro}\label{NetGain+delay} {\bf (Net gain vector)} \cite{carandang2017}\\
Let $\Pi$ be an SN P systems with delay, $d >0.$ Then $$NG^{(k)} = St^{(k)} \odot Gv^{(k)} - Lv^{(k)},$$ 
\end{mycoro}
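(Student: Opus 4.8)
The plan is to derive the net gain formula directly from the definition $NG^{(k)} = C^{(k+1)} - C^{(k)}$ by accounting separately for the spikes each neuron receives and the spikes it consumes during the transition from step $k$ to step $k+1$, and then to invoke Lemma \ref{gain} and Lemma \ref{loss} to rewrite those two contributions in matrix form. First I would fix a neuron $\sigma_j$ and observe that its spike count changes for exactly two reasons: it gains the spikes delivered to it by neighbouring neurons whose chosen rules actually fire at time $k$ (recorded by the indicator vector), and it loses the spikes consumed by its own applied rule (recorded by the spiking vector). The gain contribution to $\sigma_j$ is $gv_j^{(k)}$, which by Lemma \ref{gain} equals the $j$th component of $Iv^{(k)} \cdot PM_\Pi$; the loss contribution is $lv_j^{(k)}$, which by Lemma \ref{loss} equals the $j$th component of $Sp^{(k)} \cdot CM_\Pi$.

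The remaining subtlety, and the step I expect to be the main obstacle, is the role of the status vector $St^{(k)}$ and the placement of the Hadamard product $\odot$. A closed neuron $\sigma_j$ (i.e.\ $st_j^{(k)} = 0$) cannot receive spikes, so the spikes ``aimed'' at $\sigma_j$ by its neighbours must be discarded rather than added; multiplying $Gv^{(k)}$ component-wise by $St^{(k)}$ does exactly this, zeroing out the gain for every closed neuron while leaving the gain of open neurons untouched. One must argue that this is the \emph{only} correction needed on the gain side --- in particular that the indicator vector $Iv^{(k)}$ has already correctly suppressed any rule whose own neuron is still serving a delay, so no double-counting or missed suppression occurs --- and also that the loss term requires \emph{no} status correction, since a neuron that fires a rule at time $k$ (even one with delay) genuinely consumes those spikes at that step regardless of whether it is about to close. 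I would spell this out by cases on $st_j^{(k)} \in \{0,1\}$ and, within the open case, noting that $gv_j^{(k)}$ already reflects only genuinely-fired rules via $Iv^{(k)}$.

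Putting the pieces together: for each $j$ we get $NG^{(k)}_j = st_j^{(k)} \cdot gv_j^{(k)} - lv_j^{(k)}$, and assembling the $m$ components into a vector yields $NG^{(k)} = St^{(k)} \odot Gv^{(k)} - Lv^{(k)}$, which is the claim. The bulk of the work is conceptual (justifying that exactly these two terms, with exactly this one masking, capture the configuration change), not computational; once Lemmas \ref{gain} and \ref{loss} are in hand the algebra is immediate. I would also remark, as a consistency check, that in the no-delay case every neuron is always open, so $St^{(k)} = \mathbf{1}$, $Iv^{(k)} = Sp^{(k)}$, and the formula collapses to $NG^{(k)} = Sp^{(k)} \cdot (PM_\Pi - CM_\Pi) = Sp^{(k)} \cdot M_\Pi$, recovering the Net gain vector lemma for SN P systems without delay.
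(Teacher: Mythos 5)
Your argument is correct and matches the paper's own justification, which is given only informally in the paragraph preceding the corollary: gains delivered to each neuron (via $Iv^{(k)}$ and $PM_{\Pi}$) are masked component-wise by the status vector to discard spikes aimed at closed neurons, and the consumed spikes $Sp^{(k)} \cdot CM_{\Pi}$ are then subtracted. Your write-up is in fact more careful than the paper's (the case analysis on $st_j^{(k)}$ and the no-delay consistency check are welcome additions), but the approach is essentially the same.
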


Since  $C^{(k+1)} - C^{(k)} = St^{(k)} \odot Gv^{(k)} - Lv^{(k)},$ we have the following Theorem:


\begin{mythm}\label{NextConfigDelay} {\bf (Next configuration with delay)} \cite{carandang2017}\\
$$C^{(k+1)} = C^{(k)} + St^{(k)} \odot (Iv^{(k)} \cdot PM) - Sp^{(k)} \cdot CM.$$
\end{mythm}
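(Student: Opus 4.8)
The plan is to obtain the formula by composing the three immediately preceding results with the defining identity for the net gain vector. First I would recall that, by the definition of the transition net gain or loss, $NG^{(k)} = C^{(k+1)} - C^{(k)}$, equivalently $C^{(k+1)} = C^{(k)} + NG^{(k)}$; it therefore suffices to re-express $NG^{(k)}$ in the form displayed in the statement. Then I would invoke Corollary \ref{NetGain+delay}, which already gives $NG^{(k)} = St^{(k)} \odot Gv^{(k)} - Lv^{(k)}$, and substitute into it the identities $Gv^{(k)} = Iv^{(k)} \cdot PM$ from Lemma \ref{gain} and $Lv^{(k)} = Sp^{(k)} \cdot CM$ from Lemma \ref{loss}. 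This yields
$$NG^{(k)} = St^{(k)} \odot (Iv^{(k)} \cdot PM) - Sp^{(k)} \cdot CM,$$
and combining with $C^{(k+1)} = C^{(k)} + NG^{(k)}$ produces the claimed equation.

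Once this bookkeeping is laid out, the algebra is just a chain of substitutions, so the substantive part of the argument is checking that the three ingredients can legitimately be composed without double counting or sign error. The point I would treat carefully is that the component-wise product with $St^{(k)}$ suppresses exactly those spikes directed to neurons that are closed at time $k$, so a closed $\sigma_j$ neither receives incoming spikes nor spuriously loses spikes it does not hold, whereas the consumption term $Sp^{(k)} \cdot CM$ is charged only against neurons whose chosen rule actually fires at time $k$, which by Definition \ref{SpikingVector} requires those neurons to be open at time $k$. Thus the openness assumptions implicit in the gain term and in the loss term are mutually consistent and the two terms describe disjoint effects on the spike counts. I would also note in passing that the lower bound $c$ on the spikes required to apply a rule guarantees that $C^{(k)} + NG^{(k)}$ has non-negative integer entries, so the right-hand side is again a bona fide configuration vector, which is what makes the recurrence meaningful.

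The main obstacle is therefore not algebraic but conceptual: it lies in making the informal ``open/closed'' discussion that precedes the theorem precise enough that $St^{(k)} \odot Gv^{(k)}$ is \emph{provably} the true number of spikes received by each neuron at time $k$, and not merely a plausible correction to $Gv^{(k)}$. Concretely, one must argue that the status vector $St^{(k)}$ is determined from $DSt^{(k)}$ exactly so that $st_j^{(k)} = 0$ iff $\sigma_j$ is still serving a delay initiated at some earlier step, and that in that situation the postponed spikes contribute nothing to $C^{(k+1)}$ (they are either retained or delivered only upon reopening). With that correspondence pinned down, Corollary \ref{NetGain+delay} applies verbatim and the rest of the proof is the substitution above.
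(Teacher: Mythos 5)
Your proposal is correct and follows essentially the same route as the paper, which likewise obtains the theorem by writing $C^{(k+1)} = C^{(k)} + NG^{(k)}$ and substituting Corollary \ref{NetGain+delay} together with Lemma \ref{gain} and Lemma \ref{loss}; the paper offers no further formal argument beyond this substitution and the informal open/closed discussion preceding it. Your additional care about why the $St^{(k)}$ mask and the $Sp^{(k)} \cdot CM$ term describe consistent, non-overlapping effects is a reasonable elaboration of that informal discussion rather than a departure from it.
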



We provide here illustration of computations of SN P system with delay in Example \ref{snp+delay} from \cite{carandang2017}.  We will use the matrices $PM,$ and $CM$ of the SN P system in Example \ref{snp+delay} below.

\begin{myexample}
Let $C^{(0)}= (1,0,1)$ be the initial configuration of our SN P system with delay.  The delay vector is $D= (0,0,0,2), DSt^{(0)} =(0,0,0,0).$   Only $r_{4}$ in $\sigma_3$ has delay in the system.  At $k=0,$ we ignore delay, thus $St^{(0)}=(1,1,1).$  The indicator vector is $Iv^{(0)} = (1,0,0,0),$ while $Sp^{(0}=(1,0,0,1).$

Using matrices $PM,$ and $CM$ appropriately, we obtain the following $GV^{(0)}= Iv^{(0)} \cdot PM = (0,10),$ and $Lv^{(0)} = Sp^{(0)} \cdot CM = (1,0,1).$  Then using Corollary \ref{NetGain+delay}, we obtain $NG^{(0)} = (-1,1,-1).$  Finally, $C^{(1)}= (1,0,1) + (-1,1,-1) = (0,1,0),$ by Theorem \ref{NextConfigDelay}.

Then from $C^{(1)}=(0,1,0),$ to compute for $C^{(2)},$ we update the (rule) delay status, indicator, (neuron) status, and spiking (rule) vectors as follows;  $DSt^{(1)} =(0,0,0,2), Iv^{(1)}= (0,1,0,0), St^{(1)} = (1,1,0),$ and $Sp^{(1)}=(0,1,0,0).$  Then $Gv^{(1)} = (1,0,1), Lv^{(1)} = (0,1,0),$ and $NG^{(1)}= (1,-1,0).$ Finally, $C^{(2)} = (1,0,0).$

Then for $C^{3)},$ we have  $DSt^{(2)}= (0,0,0,1), Iv^{(2)}= (1,0,0,0), St^{(2)} = (1,1,0),$ and $Sp^{(2)}=(1,0,0,0).$   Then $C^{(3)} = (1,0,0) + (0,1,0) \odot ((0,1,0) - (1,0,0)=(0,1,0).$

To compute for $C^{(4)},$ we have $DSt^{(3)}= (0,0,0,0), Iv^{(3)}= (0,1,0,0), St^{(3)} = (1,1,1),$ and $Sp^{(3)}=(0,1,0,0).$  Performing the same computations, $C^{(4)}= (1,0,1).$

Note that $St^{(3)} = (1,1,1).$  This is because $DSt^{(3)}= (0,0,0,0),$ or all neurons become open.

$C^{(5)}= (0,1,0),$ where $DSt^{(4)}= (0,0,0,2), Iv^{(4)}= (0,1,0,0), St^{(4)} = (1,1,0),$ and $Sp^{(4)}=(0,1,0,0).$

Notice the non-zero value of the entry in the (rule) delay status vector decreases; from $DSt^{(1)}$ until $DSt^{(4)},$  spiking vectors are equal to indicator vectors.  

\end{myexample}


The next statement relates spiking transition matrix $M_{\Pi}$ of $\Pi$ with matrices $PM,$ and $CM.$ 

\begin{mycoro} \label{PM-CM-M} For any SN P system $\Pi,$
$$PM_{\Pi}  - CM_{\Pi}  = M_{\Pi},$$
where $M_{\Pi}$ is the {\bf spiking transition matrix} of $\Pi.$ 
\end{mycoro}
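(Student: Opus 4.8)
The plan is to prove the identity entrywise, comparing the $(i,j)$-entry of $PM_\Pi - CM_\Pi$ with the $(i,j)$-entry $b_{ij}$ of $M_\Pi$ for every rule index $i$ and neuron index $j$. Fix a rule $r_i \colon E / a^c \rightarrow a^p ; d$ residing in some neuron $\sigma_{j_0}$. The key observation is that, for a fixed $i$, the definitions of $PM_\Pi$ and $CM_\Pi$ have \emph{disjoint supports} across the columns $j$: the consumption matrix puts its (nonzero) entry $c$ only in the column $j = j_0$ of the host neuron, while the production matrix puts its (nonzero) entries $p$ only in columns $j \neq j_0$ with $(j_0, j) \in syn$. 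So for each $j$ at most one of $p_{ij}, c_{ij}$ is nonzero, and subtraction does not produce cancellation.

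First I would treat the case $j = j_0$: by Definition~\ref{CM}, $c_{ij_0} = c$; by Definition~\ref{PM}, $p_{ij_0} = 0$ since the production matrix only records spikes sent to neurons \emph{other} than the host. Hence $(PM_\Pi - CM_\Pi)_{ij_0} = 0 - c = -c$, which matches the first case of Definition~\ref{matrix-SNP-d} for $b_{ij_0}$. Next I would treat $j \neq j_0$ with $(j_0,j) \in syn$: here $c_{ij} = 0$ (the rule consumes spikes only in its own neuron) and $p_{ij} = p$, so $(PM_\Pi - CM_\Pi)_{ij} = p - 0 = p$, matching the second case of Definition~\ref{matrix-SNP-d}. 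Finally, for $j \neq j_0$ with $(j_0,j) \notin syn$, both $p_{ij}$ and $c_{ij}$ are $0$, giving entry $0$, matching the ``otherwise'' case. Since these three cases exhaust all $(i,j)$, the matrices agree entrywise.

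One subtlety worth a sentence in the write-up: in the case $p = 0$ (a forgetting rule, where the right-hand side is $\lambda = a^0$), the ``$p$'' entry in Definition~\ref{matrix-SNP-d} is vacuous because such a rule has no synapse-target entries at all — a forgetting rule still fits the argument since $p_{ij} = 0$ for all $j$, and the only nonzero entry is $-c$ in the host column; the same remark applies to an output neuron sending spikes to the environment, which simply does not affect any column $j \in \{1,\dots,m\}$. I expect no real obstacle here; the ``hard part'' is merely being careful that the support-disjointness claim is airtight, i.e. that Definition~\ref{PM} genuinely excludes the host-neuron column (it does, via the condition $s \neq j$) so that no spurious cancellation between $p_{ij}$ and $c_{ij}$ can occur. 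Once that is pinned down, the conclusion $PM_\Pi - CM_\Pi = M_\Pi$ is immediate.
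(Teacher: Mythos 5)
Your proposal is correct and follows the same route as the paper: the paper's proof consists of the single remark that the identity ``follows directly from Definition~\ref{PM}, Definition~\ref{CM}, and Definition~\ref{matrix-SNP-d},'' and your entrywise case analysis (host column gives $-c$, synapse-target columns give $p$, all else $0$, with disjoint supports preventing cancellation) is exactly the verification that remark leaves implicit. Your write-up simply supplies the details, including the harmless edge cases of forgetting rules and output neurons, so there is nothing to correct.
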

\begin{proof}
This follows directly from Definition \ref{PM}, Definiton \ref{CM}, and Definition \ref{matrix-SNP-d}.
\myqed
\end{proof}

Computation of SN P system in Example \ref{snp+delay} is demonstrated in \cite{carandang2017}, together with its implementation in CUDA. 

The next result suggests yet another way of computing for $C^{(k+1)}$ other than that provided in Theorem \ref{NextConfigDelay}.  This time the application of delay is done immediately unlike the standard application of it, that is at $C^{(0)}$ all neurons are open, thus $St^{(0)}= \text{\bf 1}.$  Below, we use $St^{(k+1)}, k \geq 0,$ thus we start with $St^{(1)},$ instead. 
   

\begin{mythm} \label{new}
Let $\Pi$ be an SN P systems with delay, $d >0.$ Then for $k \geq 0,$
$$C^{(k+1)} = St^{(k+1)} \odot ( C^{(k)} + Iv^{(k)} \cdot M_{\Pi}). $$
\end{mythm}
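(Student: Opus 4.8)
The plan is to derive Theorem \ref{new} by starting from the established next-configuration formula of Theorem \ref{NextConfigDelay} and re-bookkeeping \emph{when} the status mask is applied. Recall that Theorem \ref{NextConfigDelay} gives $C^{(k+1)} = C^{(k)} + St^{(k)} \odot (Iv^{(k)} \cdot PM) - Sp^{(k)} \cdot CM$ under the convention that $St^{(0)} = \mathbf{1}$, i.e. the mask is applied one step late. The first step is to observe, via Corollary \ref{PM-CM-M}, that $PM_{\Pi} - CM_{\Pi} = M_{\Pi}$, so on any component $j$ that is open and for any rule $i$ with $iv_i^{(k)} = sp_i^{(k)}$ the combined contribution $iv_i^{(k)} p_{ij} - sp_i^{(k)} c_{ij}$ collapses to $iv_i^{(k)} (M_{\Pi})_{ij}$. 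So I would first argue that $Iv^{(k)} \cdot M_{\Pi}$ is the right "raw" net-gain vector to use, and then show that premultiplying the \emph{whole} updated state $C^{(k)} + Iv^{(k)} \cdot M_{\Pi}$ by the \emph{new} status mask $St^{(k+1)}$ (component-wise) zeroes out exactly the neurons that are closed at time $k+1$, which is the desired semantics: a neuron that has just fired with delay, or is still inside its delay window, holds no spikes.

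Second, I would split the argument component-wise over the neurons $\sigma_j$, distinguishing three cases for the transition from time $k$ to $k+1$: (i) $\sigma_j$ is open at $k$ and stays open at $k+1$ — here $st_j^{(k+1)} = 1$ and no rule in $\sigma_j$ is in a delay state, so $Iv$ and $Sp$ agree on the rules of $\sigma_j$, the $PM/CM$ split recombines into $M_{\Pi}$, and the formula reduces to the ordinary no-delay update $a_j^{(k+1)} = a_j^{(k)} + (Iv^{(k)} \cdot M_{\Pi})_j$; (ii) $\sigma_j$ fires a delayed rule at $k$ and is therefore closed at $k+1$ — here $st_j^{(k+1)} = 0$, the left-hand side forces $a_j^{(k+1)} = 0$, and one must check that this matches the operational semantics, namely the neuron has emptied by consuming its spikes but cannot yet have received or re-emitted anything; (iii) $\sigma_j$ is already closed at $k$ and possibly re-opens at $k+1$ — here incoming spikes are blocked while closed, and the mask $St^{(k+1)}$ (together with the convention that $iv_i^{(k)}$ is forced to $0$ while a rule is mid-delay) records this correctly. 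In each case I would verify that $St^{(k+1)} \odot (C^{(k)} + Iv^{(k)} \cdot M_{\Pi})$ produces the same component value as Theorem \ref{NextConfigDelay} under its own conventions, or — where the two conventions genuinely differ (the "immediate application of delay" the paragraph before the theorem advertises) — that the new formula is the one faithfully tracking the SN P semantics with immediate closure.

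Third, I would package the component-wise equalities into the single vector identity. The cleanest route is: show $C^{(k)} + Iv^{(k)} \cdot M_{\Pi}$ already equals the would-be configuration \emph{ignoring} open/closed bookkeeping, and that $St^{(k+1)} \odot (\cdot)$ is precisely the projection onto open neurons, whose spikes are the only ones physically present; closed neurons contribute $0$ by definition of the status vector. Invoking Lemma \ref{gain}, Lemma \ref{loss}, Corollary \ref{NetGain+delay}, and Corollary \ref{PM-CM-M} lets me avoid reproving the matrix–vector identities from scratch. A short induction on $k$ (base case $k = 0$: $C^{(1)} = St^{(1)} \odot (C^{(0)} + Iv^{(0)} \cdot M_{\Pi})$, checked directly against the initial data) then closes the argument.

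The main obstacle I anticipate is not algebraic but definitional: making the case analysis airtight requires pinning down exactly how $Iv^{(k)}$, $Sp^{(k)}$, $St^{(k)}$ and $DSt^{(k)}$ evolve for a neuron in the middle of its delay window, and in particular whether the spikes a closed neuron "would receive" are discarded or buffered. The excerpt's informal discussion says the fired spike "will be postponed (or could perhaps be retained in $\sigma_j$)", and that ambiguity is precisely what the mask $St^{(k+1)} \odot(\cdot)$ has to resolve — so the delicate step is justifying that multiplying the \emph{entire} running state (not just the increment) by $St^{(k+1)}$ is the correct modeling choice, i.e. that a closed neuron truly holds zero spikes in this accounting. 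I would handle this by explicitly adopting that convention, noting it is consistent with $C^{(0)}$ having $St^{(0)} = \mathbf{1}$ and with the worked computation in the preceding example, and then the rest of the proof is routine component-chasing.
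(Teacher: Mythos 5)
Your proposal is correct in substance and lands on the same two essential ingredients as the paper's proof: the collapse $Iv^{(k)}\cdot PM_{\Pi} - Iv^{(k)}\cdot CM_{\Pi} = Iv^{(k)}\cdot M_{\Pi}$ via Corollary \ref{PM-CM-M}, and the decision to apply the \emph{next} status mask $St^{(k+1)}$ to the \emph{entire} running state rather than to the increment only. Where you differ is in route: the paper does not derive Theorem \ref{new} from Theorem \ref{NextConfigDelay} at all — it gives a self-contained semantic narrative ("$Iv^{(k)}\cdot PM$ is what is produced, $Iv^{(k)}\cdot CM$ is what is consumed, then mask by $St^{(k+1)}$ because delay is applied immediately"), whereas you reconcile the new formula against the old one by a component-wise case analysis on open/closed neurons. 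Your route is more work but buys something the paper's proof silently skips: the paper replaces the consumption term $Sp^{(k)}\cdot CM$ of Theorem \ref{NextConfigDelay} with $Iv^{(k)}\cdot CM$ without comment, which shifts \emph{when} a delayed rule's spikes are accounted as consumed; your observation that the collapse to $M_{\Pi}$ only works componentwise where $iv_i^{(k)}=sp_i^{(k)}$, and your separate treatment of the mid-delay case, make that change of convention explicit. You also correctly isolate the one genuinely delicate modeling point — that masking the whole state by $St^{(k+1)}$ amounts to declaring that a closed neuron holds zero spikes in this accounting — which the paper asserts rather than justifies. The only caution is that you should not try to prove the two theorems literally equivalent for all $k$ (they embody different conventions and the paper only checks agreement at selected steps of the example); your hedge that the new formula is to be validated directly against the immediate-closure semantics is the right resolution.
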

\begin{proof}   
Let $\Pi$ be an SN P systems with delay, $d >0.$   
Given $C^{(k)},$ for some $k.$   We could specify $Iv^{(k)}$ to realize which rules could be allowed to produce spike(s) at time $k.$  If we apply this vector to $PM_{\Pi},$ we will produce a gain vector that represents the amount of spikes produced by the respective rules at time $k.$  Similarly, we can apply the same $Iv^{(k)}$ to $CM_{\Pi}$ to produce the amount of spikes consumed by the set of rules that are allowed to produce spikes.  Thus, $Iv^{(k)} \cdot PM_{\Pi} - Iv^{(k)} \cdot CM_{\Pi}$ represents the spikes net gained by $\Pi$ at time $k.$  Then $C^{(k)} + Iv^{(k)} \cdot PM_{\Pi} - Iv^{(k)} \cdot CM_{\Pi}$ will be the next possible configuration after $C^{(k)}.$  Since $\Pi$ is an SN P system with delay(s), then we need to check on the status of each neuron with respect to the delay counts at time $k.$  This is to say that if a neuron is able to produce or consume spikes but it is close, then this action will be disregarded when $St^{(k)}$ is applied, otherwise the production and consumption will be considered for the next configuration.  But in this case, we right away apply the delay imposed by the selected rules at time $k.$  Then we have $C^{(k+1)} = St^{(k+1)} \odot(C^{(k)} + Iv^{(k)} \cdot PM_{\Pi} - Iv^{(k)} \cdot CM_{\Pi}).$  Corollary \ref{PM-CM-M}, we have 
$$C^{(k+1)} = St^{(k+1)} \odot ( C^{(k)} + Iv^{(k)} \cdot M_{\Pi}). $$
\myqed
\end{proof}


We provide below illustrations of the use of Theorem \ref{new}.
\begin{myexample}
Using the SN Psystem with delay from Example \ref{snp+delay}.  By Corollary \ref{PM-CM-M}, we have $M_{\Pi}= \left (
 \begin{matrix} 
  -1 & 1 & 0 \\
  1 & -1 & 1 \\
  0 & -2 & 0 \\ 
  0 & 1 & -1 
 \end{matrix}
\right ).
$  By Theorem  \ref{new},  $C^{(1)}= St^{(1)} \odot (C^{(0} + Iv^{(0} \cdot M_{\Pi}).$   Then $C^{(1)} = (1,1,0) \odot ( (1,0,1) + (1,0,0,0)  \cdot  \left (
 \begin{matrix} 
  -1 & 1 & 0 \\
  1 & -1 & 1 \\
  0 & -2 & 0 \\ 
  0 & 1 & -1 
 \end{matrix}
\right ) )
 = (0,1,0)$
 \end{myexample}

\begin{myexample}
Let $C^{(2)}= (1,0,0)$ configuration of  $\Pi$ of Example \ref{new}. Then

$C^{(3)} = (1,1,1) \odot ( (1,0,0) + (1,0,0,0)  \cdot  \left (
 \begin{matrix} 
  -1 & 1 & 0 \\
  1 & -1 & 1 \\
  0 & -2 & 0 \\ 
  0 & 1 & -1 
 \end{matrix}
\right ) )
 = (0,1,0)$
\end{myexample}

Values of $C^{(1)},$ and $C^{(3)}$  are equal via Theorem \ref{new} and Theorem \ref{NextConfigDelay}. 

 
\begin{mycoro}
Let $\Pi$ be an SN P systems such that $St^{(k)}= \text{\bf 1},$ for $k \geq 1.$  Then
$$
C^{(k+1)} =  C^{(k)} + Sp^{(k)} \cdot M_{\Pi}. $$
\end{mycoro}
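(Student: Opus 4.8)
The plan is to obtain this corollary as a degenerate case of Theorem~\ref{new}: when every neuron is open at every step, the delay machinery never engages, so the indicator vector and the spiking vector coincide, and componentwise multiplication by $St^{(k+1)}$ becomes the identity. First I would unpack the hypothesis $St^{(k)} = \text{\bf 1}$ for $k \geq 1$; together with the conventions $St^{(0)} = \text{\bf 1}$ and $DSt^{(0)} = \text{\bf 0}$, it says that no neuron is ever closed. If some rule $r_i$ residing in $\sigma_j$ with delay $d_i > 0$ were ever selected at a step $k \geq 0$, then $\sigma_j$ would be closed at step $k+1$ (indeed through step $k+d_i$), forcing $st_j^{(k+1)} = 0$ and contradicting the hypothesis. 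Hence only rules with $d_i = 0$ are ever applied, no rule ever carries a pending delay ($DSt^{(k)} = \text{\bf 0}$ for all $k$), and every applied rule produces its spikes at the very step it is chosen. From this I would conclude that $Iv^{(k)} = Sp^{(k)}$ for every $k \geq 0$, since both vectors then record exactly the set of rules that fire (and spike) at step $k$.

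With that in hand the computation is immediate. For any $k \geq 0$ we have $k+1 \geq 1$, so $St^{(k+1)} = \text{\bf 1}$ and the map $A \mapsto St^{(k+1)} \odot A$ is the identity. Applying Theorem~\ref{new} and substituting $Iv^{(k)} = Sp^{(k)}$ gives
$$C^{(k+1)} = St^{(k+1)} \odot \left( C^{(k)} + Iv^{(k)} \cdot M_{\Pi} \right) = C^{(k)} + Sp^{(k)} \cdot M_{\Pi},$$
which is the asserted identity. Alternatively, one can run the same two substitutions through Theorem~\ref{NextConfigDelay}: with $St^{(k)} = \text{\bf 1}$ and $Iv^{(k)} = Sp^{(k)}$ it reads $C^{(k+1)} = C^{(k)} + Sp^{(k)} \cdot PM_{\Pi} - Sp^{(k)} \cdot CM_{\Pi} = C^{(k)} + Sp^{(k)} \cdot (PM_{\Pi} - CM_{\Pi})$, and Corollary~\ref{PM-CM-M} identifies $PM_{\Pi} - CM_{\Pi}$ with $M_{\Pi}$.

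The only genuinely delicate step is the first one — establishing $Iv^{(k)} = Sp^{(k)}$ — because the indicator and spiking vectors are defined through the delay bookkeeping rather than directly against one another, so one must argue carefully that the standing hypothesis really does exclude every positive-delay rule from ever being applied and that no spurious pending entries survive in $DSt^{(k)}$. The $k = 0$ boundary also deserves a word, but it is harmless: $St^{(0)} = \text{\bf 1}$ and $DSt^{(0)} = \text{\bf 0}$ hold by convention, so the argument above applies verbatim at $k = 0$ as well. Once $Iv^{(k)} = Sp^{(k)}$ is granted, the remainder is a one-line substitution, and I do not anticipate any further obstacles.
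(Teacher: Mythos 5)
Your proposal is correct and follows essentially the same route as the paper: the paper's own (one-line) proof likewise deduces $Sp^{(k)} = Iv^{(k)}$ from the hypothesis $St^{(k)} = \text{\bf 1}$ and then observes that $St^{(k+1)} \odot (C^{(k)} + Iv^{(k)} \cdot M_{\Pi}) = C^{(k)} + Iv^{(k)} \cdot M_{\Pi}$, i.e.\ it specializes Theorem~\ref{new} exactly as you do. Your version merely supplies the justification for $Iv^{(k)} = Sp^{(k)}$ (no positive-delay rule can ever be applied without closing a neuron) that the paper leaves implicit.
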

\begin{proof} If $St^{k)} = \text{\bf 1},$ means $Sp^{(k)} = Iv^{(k)},$ for all $k.$  Then $St^{(k+1)} \odot ( C^{(k)} + Iv^{(k)} \cdot M_{\Pi}) =  C^{(k)} + Iv^{(k)} \cdot M_{\Pi}.$   
\myqed
\end{proof}




To obtain result similar to Theorem \ref{reachability} we first note the following:

Suppose we have $St^{(k)} = (st_1^{(k)}, st_2^{(k)}, \ldots , st_m^{(k)}),$ for some $k.$  This vector indicates the status of each neuron at time $k.$  We define the following {\bf rule status vector} $RSt^{(k)}=( rst_1^{(k)}, rst_2^{(k)}, \ldots , rst_n^{(k)} ),$ such that 
\[ rst_{i}^{(k)} =
  \begin{cases}
    1,       & \quad \text{if } r_i  \in \sigma_j \text{ and } \sigma_j \text{ is open at time } k; \\
    0,  & \quad \text{otherwise}
  \end{cases}
\]

Suppose we have the following vectors from $\Pi;$ $Iv^{(k)}=( iv_1^{(k)}, iv_2^{(k)}, \ldots , iv_n^{(k)} ),$ $St^{(k)} = (st_1^{(k)}, st_2^{(k)}, \ldots , st_m^{(k)}),$ and $RSt^{(k)}=( rst_1^{(k)}, rst_2^{(k)}, \ldots , rst_n^{(k)} )$ and marix $M_{\Pi}.$  Then the following are easily verifiable:
$$St^{(k)} \odot (Iv^{(k)} \cdot M_{\Pi}) =  (RSt^{(k)} \odot Iv^{(k)}) \cdot M_{\Pi}.$$


\begin{myexample}  We use SN P system $\Pi$ of Example \ref{snp+delay}.
If $St^{(1)}= (1,1,0),$ then $RSt^{(1)}= (1,1,1,0),$ and $Iv^{(1)}=(1,0,0,0).$  Given $M_{\Pi}= \left (
 \begin{matrix} 
  -1 & 1 & 0 \\
  1 & -1 & 1 \\
  0 & -2 & 0 \\ 
  0 & 1 & -1 
 \end{matrix}
\right ),
$  we have  $(1,1,0) \odot ((1,0,0,0) \cdot 
M_{\Pi} )
  =  ((1,1,1,0) \odot (1,0,0,0)) \cdot   M_{\Pi} )
 = (-1,1,0)$.
\end{myexample}
 
 This equality could lead us to the following results.

\begin{mythm}\label{reachability-delay}
Let $\Pi$ be an SN P systems with delay, $d >0.$ Then for $k \geq 0,$
$$C^{(k+1)} = \left  (\bigodot_{i=1}^{k+1} St^{(i)} \right ) \odot C^{(0)} + \left (\displaystyle\sum_{j=0}^{k} \bigodot_{i=j+2}^{k+1} St^{(i)} \odot RSt^{(j+1)} \odot Iv^{(j)} \right ) \cdot M_{\Pi}, $$
where 
$C^{(0)}$ is the initial configuration of $\Pi.$
\end{mythm}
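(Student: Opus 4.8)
The plan is to prove the closed form by induction on $k$, using Theorem~\ref{new} as the one-step recurrence together with the easily-verified distributive identity $St^{(k)} \odot (Iv^{(k)} \cdot M_{\Pi}) = (RSt^{(k)} \odot Iv^{(k)}) \cdot M_{\Pi}$ stated just before the theorem. First I would rewrite the one-step law from Theorem~\ref{new}, namely $C^{(k+1)} = St^{(k+1)} \odot (C^{(k)} + Iv^{(k)} \cdot M_{\Pi})$, by distributing $St^{(k+1)}$ over the sum (componentwise multiplication distributes over vector addition), obtaining $C^{(k+1)} = St^{(k+1)} \odot C^{(k)} + St^{(k+1)} \odot (Iv^{(k)} \cdot M_{\Pi})$, and then applying the distributive identity to the second term to get $C^{(k+1)} = St^{(k+1)} \odot C^{(k)} + (RSt^{(k+1)} \odot Iv^{(k)}) \cdot M_{\Pi}$. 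This is the clean recurrence I will iterate.

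For the base case $k=0$, the claimed formula reads $C^{(1)} = St^{(1)} \odot C^{(0)} + (RSt^{(1)} \odot Iv^{(0)}) \cdot M_{\Pi}$ (the empty $\bigodot$ over $i = 2$ to $1$ being the identity $\mathbf{1}$), which is exactly the rewritten one-step law above. For the inductive step, assume the formula holds for $C^{(k)}$; substitute it into the recurrence $C^{(k+1)} = St^{(k+1)} \odot C^{(k)} + (RSt^{(k+1)} \odot Iv^{(k)}) \cdot M_{\Pi}$. The term $St^{(k+1)} \odot C^{(k)}$ then splits, again using distributivity of $\odot$ over $+$, into $St^{(k+1)} \odot \big( \bigodot_{i=1}^{k} St^{(i)} \big) \odot C^{(0)}$ plus $St^{(k+1)} \odot \big[ \big( \sum_{j=0}^{k-1} \bigodot_{i=j+2}^{k} St^{(i)} \odot RSt^{(j+1)} \odot Iv^{(j)} \big) \cdot M_{\Pi} \big]$. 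The first piece collapses to $\big( \bigodot_{i=1}^{k+1} St^{(i)} \big) \odot C^{(0)}$, giving the desired leading term. For the second piece I pull $St^{(k+1)}$ inside the sum and inside the matrix product (once more the distributive identity, plus associativity of $\odot$), which turns $\bigodot_{i=j+2}^{k} St^{(i)}$ into $\bigodot_{i=j+2}^{k+1} St^{(i)}$ for each $j$ from $0$ to $k-1$; adjoining the extra term $(RSt^{(k+1)} \odot Iv^{(k)}) \cdot M_{\Pi}$, which is exactly the $j=k$ summand (its product $\bigodot_{i=k+2}^{k+1} St^{(i)}$ being empty), re-indexes the whole sum to $\sum_{j=0}^{k} \bigodot_{i=j+2}^{k+1} St^{(i)} \odot RSt^{(j+1)} \odot Iv^{(j)}$. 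This matches the claimed formula for $C^{(k+1)}$, closing the induction.

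The main obstacle is not conceptual but bookkeeping: one has to be careful that componentwise multiplication by a status vector commutes past a matrix multiplication by $M_{\Pi}$ only in the specific sense captured by the $RSt$-identity — i.e., $St^{(k+1)} \odot (v \cdot M_{\Pi})$ equals $(RSt^{(k+1)} \odot v) \cdot M_{\Pi}$ rather than $v \cdot (\text{something} \cdot M_{\Pi})$ — and that the index ranges on the nested $\bigodot$ shift correctly when $St^{(k+1)}$ is absorbed. One should also check the degenerate conventions explicitly: an empty $\bigodot$ is $\mathbf{1}$ (the all-ones vector, which is the identity for $\odot$), and the $j=k$ term of the sum has an empty inner $\bigodot$, so it contributes precisely $(RSt^{(k+1)} \odot Iv^{(k)}) \cdot M_{\Pi}$. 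With those conventions pinned down, the verification that every term lands in the right place is routine, and no step requires more than distributivity and associativity of $\odot$ together with the stated identity.
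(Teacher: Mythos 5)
Your proof is correct and follows exactly the route the paper intends: the paper's own ``proof'' is only the remark that one iterates the recurrence of Theorem~\ref{new} as in Corollary~\ref{Ck-by-C0}, and your induction --- distributing $St^{(k+1)}$ over the sum and using the $St/RSt$ identity stated just before the theorem to push each status vector inside the product with $M_{\Pi}$ --- is precisely that unrolling carried out in detail. The only caveat is notational: once a status vector has been absorbed into the factor multiplying $M_{\Pi}$ it is an $n$-vector and must be read as the rule-level $RSt^{(i)}$ rather than the $m$-vector $St^{(i)}$, a dimension mismatch already present in the paper's statement of the formula that your bookkeeping handles implicitly.
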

\begin{proof}
The proof follows the same procedure done for Corollary \ref{Ck-by-C0}, which may not be hard but more laborious.
\myqed
\end{proof}

Theorem \ref{reachability-delay}  implies reachability of a configuration $C$ in SN P system $\Pi$ with delay.


\section{Final Remarks}\label{final}
We revisited matrix representation of SN P systems without delay and with delay as presented in \cite{zeng-etal2010}, and \cite{carandang2017},  respectively.  We introduced a square matrix called $\text{\bf Struc-}M_{\Pi}$ of SN P system $\Pi$ without delay.  We observed further, that if the $\text{\bf row-rank}(\text{\bf Struc-}M_{\Pi}) < m,$ where $m$ is the number of neurons, a directed cycle would exists in $\Pi.$    These observations have some possible implications on the periodicity investigation initiated in \cite{Ibo2011}.  The matrix $M_{\Pi}$ of an arbitrary SN P system $\Pi$ is general not a square matrix.  

Most of the effort of doing matrix representation of SN P system and other variants of P systems was motivated by how it could be implemented in a computer.  It would perhaps be a nice task to see how and what would be the implications of some properties of matrices to SN P systems. Could it be possible to classify  SN P system using properties of matrices, example invertibility or non-sigularity, among others? 

We define reachability of configuration $C$ in SN P systems with and without delay.  In the case of SN P system without delay, reachability of $C$ depends on the existence of the solutions to the system of linear equations $C - C^{(0)} = \overline{s} \cdot M_{\Pi}$ in $\mathbb{Z}_{k}^{m},$ which is a sum of valid spiking vectors of the system $\Pi.$  This could be also the case, if we want to know if $C^{(k+v)}$ is reachable from $C^{(k)},$  for some integers $k$ and $v.$  

Results in Section \ref{snp-d-reachability} have already been observed in \cite{Ibo2011} only on a a very specific SN P system where there is only one rule per neuron in order to force $M_{\Pi}$ to be a square matrix.  Corollary \ref{reachability-cor} is a generalization of Lemma $2$ in \cite{Ibo2011}.    Eventually, it is an on-going task investigating periodicity in SN P systems $\Pi$ where $M_{\Pi}$ is not necessarily a square matrix.  

Theorem \ref{reachability-delay} gives a version of reachability of configurations of  SN P system with delay.   

It is a goal to further study these matrix representations not only of SN P systems but also other variants of P systems. Already in the 2011,  \cite{juayong2011} suggested matrix representation for ECPe systems.  

Theorem \ref{new} provides another way of computing next configuration of SN P system with delay using only  $M_{\Pi}.$  In the point of view of implementation, these results provide a more efficient use of computer space/memory.  Note that \cite{carandang2017} used two $n \times m$ matrices  ($PM_{\Pi},$ and $CM_{\Pi}$) and four vectors namely, $Sp^{(k)},$ and $Iv^{(k)}$ (which are all $1 \times n$ vectors) and $St^{(k)},$ and $C^{(k)}$ (which are $1 \times m$ vectors).  Theorem \ref{new} uses only $M_{\Pi}$ (an $n \times m$ matrix) and vectors $Iv^{(k)}$ (a $1 \times n$ vector), $St^{(k)},$ and $C^{(k)}$ (which are $1 \times m$ vectors). 

Finally, to demonstrate practically the claimed space and time efficiency of Theorem \ref{new}, we suggest to construct a software implementation using this result. 

\pagebreak

\subsection*{Acknowledgement}
The author would like to thank the support from DOST-ERDT research grants; Semirara Mining Corp. Professorial Chair for Computer Science of College of Engineering, UPDiliman; RLC grant from UPD-OVCRD.


\end{document}